\documentclass[letterpaper]{article} %
\usepackage{aaai2026}  %
\usepackage{times}  %
\usepackage{helvet}  %
\usepackage{courier}  %
\usepackage[hyphens]{url}  %
\usepackage{graphicx} %
\usepackage{natbib}  %
\usepackage{caption} %
\usepackage{newfloat}
\usepackage{listings}
\DeclareCaptionStyle{ruled}{labelfont=normalfont,labelsep=colon,strut=off} %
\nocopyright %

\title{Algorithms for Deciding the Safety of States in Fully Observable Non-deterministic Problems: Technical Report}
\author{
    Johannes Schmalz,
    Chaahat Jain
}
\affiliations{
    Saarland University, Germany
}

\newcommand{\report}[1]{#1}

\newcommand{\short}[1]{}

\usepackage{xparse}

\newcommand{\argmin}{\operatornamewithlimits{argmin}}

\newcommand{\lts}{\Theta}

\newcommand{\transitions}{\ensuremath{\mathcal T}}

\newcommand{\traces}{\textit{Paths}}
\newcommand{\maxTraces}{\textit{MaxPaths}}

\newcommand{\state}{\ensuremath{s}}
\newcommand{\succStateOf}[1]{\ensuremath{ {#1}' }}
\newcommand{\succState}{\ensuremath{ \succStateOf{\state} }}

\newcommand{\policy}{\ensuremath{\pi}}
\newcommand{\policies}{\Pi}
\DeclareDocumentCommand{\policyLTS}{G{\policy}}{\lts^{#1}}
\DeclareDocumentCommand{\policyTransitions}{G{\policy}}{\transitions^{#1}}
\DeclareDocumentCommand{\policyTraces}{G{\policy}}{\traces^{#1}}
\DeclareDocumentCommand{\policyMaxTraces}{G{\policy}}{\maxTraces^{#1}}
\DeclareDocumentCommand{\policyDistance}{G{s}}{\delta_{#1}}
\DeclareDocumentCommand{\proximalPolicies}{G{\radius}G{\policy}D(){s}}{\policies_{#2, #1}^{#3}}

\DeclareDocumentCommand{\Jstar}{G{\radius}G{\policy}}{J^*_{#2, #1}}
\DeclareDocumentCommand{\unsafetyReward}{G{\radius}}{\mathsf{F}_{#1}}
\DeclareDocumentCommand{\devCost}{G{\radius}G{\policy}}{\mathsf{C}_{#2, #1}}
\DeclareDocumentCommand{\bellman}{G{\radius}G{\policy}}{\mathcal{B}_{#2, #1}}

\newcommand{\radius}{r}

\newcommand{\optVI}{\ensuremath{\text{Prop-}\mathcal{U}}\xspace}

\newcommand{\nPI}{\ensuremath{\text{nPI}}\xspace}
\newcommand{\iPI}{\ensuremath{\text{iPI}}\xspace}
\newcommand{\optPI}{\iPI}
\newcommand{\reciPI}{\ensuremath{\text{rec-}\iPI}}
\newcommand{\MarkUnsafe}{\ensuremath{\textit{Mark-Unsafe}}\xspace}
\newcommand{\Greedy}{\ensuremath{\textit{Greedy}}\xspace}
\newcommand{\pInit}{\ensuremath{\p_{\text{init}}}\xspace}
\newcommand{\TarjanSafe}{\ensuremath{\text{TarjanSafe}}\xspace}
\newcommand{\orderApp}{\ensuremath{\textit{app}}\xspace}
\newcommand{\orderLearn}{\ensuremath{\textit{learn}}\xspace}
\newcommand{\orderRand}{\ensuremath{\textit{rand}}\xspace}
\newcommand{\iPIApp}{\ensuremath{\iPI(\orderApp)}\xspace}

\newcommand{\actionOrder}{\ensuremath{\mathbb{A}}\xspace}

\newcommand{\vect}[1]{\ensuremath{\bm{#1}}}

\usepackage{graphicx} %
\usepackage[tight-spacing=true]{siunitx} %
\usepackage[export]{adjustbox}
\usepackage{lscape}
\usepackage{makecell}
\usepackage{xspace}
\usepackage{fwt_math_and_notation}
\usepackage{fwt_lp}
\usepackage{cuted}
\usepackage{framed}
\usepackage{bm}

\usepackage{tikz-network}
\usepackage{tikz}
\usetikzlibrary{fit, calc}
\usetikzlibrary{shapes.multipart}  %

\usepackage[ruled,vlined,linesnumbered, commentsnumbered]{algorithm2e} %
\SetKwProg{function}{Function}{}{}
\SetKwProg{class}{Class}{}{}
\SetKwRepeat{ForIUntil}{for \(i \in \{0,1,\dots\}\)}{until}
\SetKwBlock{RepeatForever}{repeat}

\robustify{\bfseries}
\usepackage{enumerate}

\usepackage[inline]{enumitem}

\usepackage{multirow}

\usepackage{cleveref}
\crefname{algorithm}{alg.}{algs.}
\crefname{line}{line}{lines}
\crefname{def}{defn.}{defns.}
\crefname{lem}{lem.}{lem.s}
\crefname{theorem}{thm.}{thm.s}
\crefname{figure}{fig.}{figs.}
\crefname{table}{tab.}{tabs.}
\crefname{teo}{thm.}{thm.s}
\crefname{assump}{assump.}{assump.s}

\newcommand{\wrt}{\ensuremath{\text{w.r.t.\ }}}
\newcommand{\st}{\ensuremath{\text{s.t.\ }}}

\newcommand{\fond}{\ensuremath{\Theta}\xspace}
\newcommand{\transit}{\ensuremath{\mathcal{T}}\xspace}
\newcommand{\failStates}{\ensuremath{\Ss_{f}}\xspace}
\newcommand{\SZ}{\ensuremath{\Ss_{I}}\xspace}

\newcommand{\iverson}[1]{\ensuremath{\llbracket #1 \rrbracket}}

\newcommand{\lao}{LAO$^*$\xspace}

\newcommand{\lrtdp}{LRTDP\xspace}

\newcommand{\supp}{\ensuremath{\text{supp}}}

\newcommand{\V}{\ensuremath{V}\xspace}
\newcommand{\Q}{\ensuremath{Q}\xspace}
\newcommand{\Qsa}{\ensuremath{Q(\s, \ac)}\xspace}
\newcommand{\qvalue}{\Q-value\xspace}
\newcommand{\qvalues}{\Q-values\xspace}

\newcommand{\paths}{\ensuremath{\textit{Paths}}}

\begin{document}

\maketitle

\begin{abstract}
Learned action policies are increasingly popular in sequential decision-making, but suffer from a lack of safety guarantees.
Recent work introduced a pipeline for testing the safety of such policies under initial-state and action-outcome non-determinism.
At the pipeline's core, is the problem of deciding whether a state is safe (a safe policy exists from the state) and finding \emph{faults}, which are state-action pairs that transition from a safe state to an unsafe one.
Their most effective algorithm for deciding safety, \TarjanSafe, is effective on their benchmarks, but we show that it has exponential worst-case runtime with respect to the state space.
A linear-time alternative exists, but it is slower in practice.
We close this gap with a new \emph{policy-iteration} algorithm \optPI, that combines the best of both: it matches \TarjanSafe's best-case runtime while guaranteeing a polynomial worst-case.
Experiments confirm our theory and show that in problems amenable to \TarjanSafe \optPI has similar performance, whereas in ill-suited problems \optPI scales exponentially better.
\end{abstract}

\short{
\noindent
\textbf{Extended Version --- (TODO: cite)}
\red{(IMPORTANT: turn copyright back on)}
}

\begin{links}
  \link{Short Version Published at ICAPS 2026}{https://doi.org/10.1609/icaps.v36i1.42858}
\end{links}

\section{Introduction}
Learned action policies are gaining traction in AI and AI planning, e.g., \citet{mnih:etal:nature-15,silver:etal:nature-16,silver:etal:science-18,Toyer2020:ASNets}.
However, such policies come without any safety guarantees, creating demand for safety assurance methods.
We focus on the safety-testing framework of \citet{Jain2025:faults} (henceforth JEA) in fully observable non-deterministic (FOND) planning.
In JEA's setting, a policy $\pi$ is \textbf{safe} in a given state $s$ if its execution will never lead to a \textbf{failure condition} $\phi_F$, a state is \textbf{safe} if it has a safe policy, and \textbf{faults} are state-action pairs $(s, \pi(s))$ where $s$ is safe but applying $\pi(s)$ non-deterministically leads to an unsafe state $s'$.
Their framework takes a learned policy, finds paths therein that lead to fail states, and then finds faults by deciding the safety of states along the paths.
These faults can be used to repair the policy.
To decide whether a state is safe JEA consider two orthogonal approaches: (1) adapting well-known algorithms for Markov Decision Processes (MDPs), such as Value Iteration (VI), \lao~\cite{Hansen2001:ilao}, and \lrtdp~\cite{Bonet2003:lrtdp}; (2) creating a specialised Depth-First Search (DFS) with a mechanism for detecting Strongly Connected Components (SCCs), which they call \TarjanSafe.
Their results showed that \TarjanSafe is very effective for deciding safety.

In this paper, we show that \TarjanSafe has an \emph{exponential worst-case time complexity \wrt the state space} using a pathological family of tasks where \TarjanSafe expands the same states many times. %
Despite this limitation, \TarjanSafe performs well in practice because it often explores only a small fraction of the state space.
To contrast, we present \optVI, which follows techniques from reachability games~\cite{Diekert2022:reachability-games} and works by propagating unsafe states.
Its worst case is linear, but it is unusable in practice because there are too many unsafe states to propagate.
\optVI is unusable in practice because there are too many unsafe states to propagate.
To address this gap we introduce \optPI, a new algorithm based on \textbf{policy iteration} that combines a \emph{polynomial worst-case runtime} while enjoying the practical speed of \TarjanSafe;
\optPI starts with an initial policy \(\pInit\) and attempts to prove its safety, making minimal changes to the policy as unsafe states are discovered; this makes \optPI very efficient at finding safe policies similar to \(\pInit\), which occurs frequently.
Finally, we confirm our theory experimentally.
We use JEA's benchmarks, which mostly correspond to \TarjanSafe's best case; there, \optPI shows similar performance to \TarjanSafe.
Then, we introduce the \textit{Flappy Bird} domain to showcase a problem ill-suited to \TarjanSafe, and indeed \optPI scales exponentially better.
This places \optPI as the current best algorithm for deciding state safety and gives valuable insight into what makes a good algorithm for this problem.

Our contributions are: a simplified formalism for FOND safety, new algorithms for deciding safety (\optVI and \optPI), a complexity analysis of \TarjanSafe and our algorithms, and an empirical evaluation that shows \optPI's effectiveness.

\section{Background}

We consider fully-observable non-deterministic (FOND) planning tasks \cite{Daniele2000:FOND,Cimatti2003:FOND}.
We follow the formalism of JEA, but break it down into a simpler, equivalent formulation.
A task's transition system is defined by \(\fond = \langle \Ss, \A, \transit, \SZ \rangle\) where
\short{
\Ss is a finite set of states, \A is a finite set of actions and we write \(\A(\s)\) to denote the set of actions applicable in state \s, \transit is a non-deterministic transition function where \(\transit(\s,\ac) \in \mathcal{P}(\Ss) \setminus \emptyset\) gives the set of states that may occur after applying \ac to \s, \(\SZ \subsetneq \Ss\) is a non-empty set of initial states.
}
\report{
\begin{itemize}
\item \Ss is a finite set of states,
\item \A is a finite set of actions and we write \(\A(\s)\) to denote the set of actions applicable in state \s,
\item \transit is a non-deterministic transition function where \(\transit(\s,\ac) \in \mathcal{P}(\Ss) \setminus \emptyset\) gives the set of states that may occur after applying \ac to \s,
\item \(\SZ \subsetneq \Ss\) is a non-empty set of initial states.
\end{itemize}
}

\textbf{Policies} are partial functions \(\p : \Ss \to \A\) with the property that \(\p(\s) \in \A(\s)\) wherever defined.
A policy \p induces a \textbf{policy graph} \(\fond^{\p} = \langle \Ss, \A, \transit^{\p}, \SZ \rangle\) with the same components as \fond except \(\transit^{\p}\) has \(\transit^{\p}(\s,\ac) = \transit(\s,\ac)\) if \(\ac = \p(\s)\) and is undefined otherwise.
A \textbf{path} is a finite or infinite sequence of alternating states and actions \(\sigma = \s_0, \ac_0, \s_1\dots\) that satisfies \(\ac_{i} \in \A(\s_i)\) and \(\s_{i+1} \in \transit(\s_i, \ac_i)\) for all \(i \geq 0\) in \(\sigma\).
The set of all paths from a state \s is \(\paths(\s)\) and \(\paths^{\p}(\s)\) is the set of paths from \s in the policy graph \(\fond^{\p}\).

Typically, FOND tasks are also specified with a non-empty set of goal states \(\Sg \subseteq \Ss\).
We call such tasks \(\fond = \langle \Ss, \A, \transit, \SZ, \Sg \rangle\) \textbf{goal-reaching FOND tasks}.
\short{
To solve them we consider strong cyclic solutions, which we present as policies \p \st following \p from any initial state \(\sZ \in \SZ\) will eventually reach a goal in \Sg assuming \emph{fairness}; fairness ensures that each effect in \(\transit(\s,\ac)\) is eventually triggered if \ac is applied to \s often enough~\cite{Cimatti2003:FOND}.
}
\report{
To solve them we consider strong cyclic solutions, which we present as policies \p \st following \p from any initial state \(\sZ \in \SZ\) will eventually reach a goal in \Sg assuming \emph{fairness}.

\begin{assumption}[Fairness]\label{assump:fairness}
If an action \ac is applied infinitely-many times in state \s, then each non-deterministic effect in \(\transit(\s,\ac)\) must trigger infinitely often~\cite{Cimatti2003:FOND}.
\end{assumption}

Formally, \p solves a goal-reaching FOND task if all finite-length paths in \(\bigcup_{\sZ \in \SZ} \paths^{\p}(\sZ)\) terminate in \Sg.
}

Recall that our setting is to identify whether a safe policy exists for states in a learned policy's envelope, so in this paper we focus on avoiding certain fail states rather than goal-reaching.
Thus, we specify a non-empty set of \textbf{fail states} \(\failStates \subseteq \Ss\), giving us \textbf{state-avoiding FOND tasks} \(\fond = \langle \Ss, \A, \transit, \SZ, \failStates \rangle\).\footnote{JEA use a failure condition \(\phi_{F}\), but for our discussion it is equivalent to consider a specified set of fail states \(\failStates \subseteq \Ss\).}
A solution to such \fond is a policy \p that never encounters any fail state, i.e., for all finite and infinite paths \(\sigma \in \bigcup_{\sZ \in \SZ} \paths^{\p}(\sZ)\), \(\sigma\) does not contain any states in \failStates.
State-avoiding FOND lets us express notions of safety, i.e., that a ``bad'' state should never happen, e.g., a self-driving car must never crash into a wall.
A state \s is called \textbf{unsafe} if no policy can guarantee that it avoids fail states from \s, i.e., \(\forall \p \; \exists \sigma \in \paths^{\p}(\s) : \sigma \cap \failStates \neq \emptyset\).
The set of unsafe states is called the unsafe region \(\mathcal{U}\).

\short{
State-avoiding tasks can be transformed into goal-reaching ones, so state-of-the-art algorithms for state-reaching FOND can be applied to our setting, e.g., PR2~\cite{Muise2024:PR2}.
However, we limit the scope of this paper to fundamental algorithms that are native to state-avoiding, letting us make a fairer comparison to JEA and perform sharper complexity analyses, thereby establishing a foundation for future work.
Moreover, using their algorithms in our codebase is non-trivial because it requires a translation from JANI to variants of PDDL, which is an open line of research, e.g., \cite{Klauck2018:JaniToPDDL}.
}

\report{

Whether a policy \p is state-avoiding (i.e., safe) can be evaluated with the following \textit{value function}.
\begin{definition}\label{def:state-avoid-value-function}
The value of policy \p at state \s is
\[
\V^{\p}(\s) = \begin{cases}
1 &\text{ if } \s \in \failStates \\
\max_{\s' \in \transit(\s, \p(\s))} \V^{\p}(\s') &\text{ if } \s \not\in \failStates.
\end{cases}
\]
\end{definition}
The policy \p is safe from a state \s, i.e., there is a way to avoid unsafe states, iff \(\V^{\p}(\s) = 0\).

\(\V^*(\s)\) is the optimal value at \s, given by \(\V^*(\s) = \min_{\p \in \Pi} \V^{\p}(\s)\).
Given \(\V^*\), if a safe policy exists from \s, then one can extract a safe policy by following \(\V^*\) greedily.
\(\V^*\) can be computed with Value Iteration (VI).
VI starts with an initial value function, e.g., \(\V(\s) = \iverson{ \s \in \failStates } \; \forall \s \in \Ss\), and then applies Bellman backups:
\[\V(\s) \gets \min_{\ac \in \A(\s)} \max_{\s' \in \transit(\s,\ac)} \V(\s').\]
The term being minimised in the Bellman backup is often called a \qvalue, so for us \(\Qsa = \max_{\s' \in \transit(\s,\ac)} \V(\s')\).
By repeatedly applying Bellman backups over all states the value function will eventually converge to \(\V^*\).
JEA give arguments for this, and it is also demonstrated by \optVI later.

Dually to \(\V^*(\s) = 0\) indicating a safe state, \(\V^*(\s) = 1\) indicates that \s is an unsafe state, from which there is no policy that can guarantee that it avoids fail states \failStates.

}

Now, we return to the problem formalisation.
JEA look for safe policies in tasks that have goals, other terminal states, and fail states to avoid.
Their policies must avoid the fail states, either by reaching a terminal state and stopping there, or by cycling in a way that avoids the fail states.
JEA's tasks can be transformed into our state-avoiding tasks by adding a self-loop action to goal and terminal states, allowing policies to avoid fail states indefinitely by using these self loops rather than terminating.
For the rest of this paper we focus on deciding the safety of individual states, so we only consider singleton initial states \(\SZ = \{\sZ\}\).

\report{
We note that state-avoiding tasks can be cast into goal-reaching ones.
Given a state-avoiding problem \(\fond = \langle \Ss, \A, \transit, \SZ, \failStates \rangle\), we can define an equivalent goal-reaching problem \(\fond' = \langle \Ss \cup \{g\}, \A, \transit', \SZ, \{ g \} \rangle\).
This introduces the artificial goal \(g\) and all transitions from non-fail states gain \(g\) as a non-deterministic effect from all safe states; also, we turn fail states into ``traps'' that only loop back to themselves and can never reach the goal.
Formally, \(\forall \s \in \Ss, \ac \in \A(\s)\)
\[
\transit'(\s,\ac) = \begin{cases}
\transit(\s,\ac) \cup \{g\} &\text{ if } \transit(\s,\ac) \text{ defined and } \s \not\in \failStates \\
\{\s\} &\text{ if } \transit(\s,\ac) \text{ defined and } \s \in \failStates \\
\text{undefined} &\text{ if } \transit(\s,\ac) \text{ undefined.}
\end{cases}
\]
Now, \(\fond'\) has a policy that always reaches its goal \(\{g\}\) iff \(\fond\) has a policy that avoids the states \failStates.
This is clear because a goal-reaching policy for \(\fond'\) can not enter any fail state \failStates, and therefore describes a state-avoiding policy for \fond.
Going the other direction, a state-avoiding policy for \fond never enters \failStates, and so all its actions have \(g\) as one of the effects, which means that, due to the fairness assumption~(\cref{assump:fairness}), the policy must eventually reach the artificial goal in \(\fond'\).

This transformation from state-avoiding to goal-reaching lets us solve the state-avoiding problem in terms of goal-reaching with state-of-the-art FOND planners such as PR2~\cite{Muise2024:PR2}.
However, we limit the scope of this paper to fundamental algorithms that are native to state-avoiding, letting us make a fairer comparison to JEA and perform sharper complexity analyses, thereby establishing a foundation for future work.
Moreover, using their algorithms in our codebase is non-trivial because it requires a translation from JANI to variants of PDDL, which is an open line of research, e.g., \cite{Klauck2018:JaniToPDDL}.
}

\section{Algorithms}\label{sec:algos}

We first explain \TarjanSafe~(JEA) and prove that it has a worst-case exponential runtime \wrt the state-space, but a good best-case that reflects its strong performance in practice.
Then, we show \optVI, an algorithm that solves state-avoiding tasks in linear time, but has a worse best-case that makes it impractical.
Finally, we introduce a Policy Iteration algorithm and prove that it combines a polynomial worst case with the same best case as \TarjanSafe.
We use the notation \(m = |\A|\), \(n = |\Ss|\), \(b = \max_{\s \in \Ss, \ac \in \A(\s)} |\mathcal{T}(\s,\ac)|\) denotes a bound on the branching factor of non-determinism, and \(\p_{\text{min-safe}}\) is a safe policy with a minimal number of states in its policy graph.
We assume that \(\A(\s) \neq \emptyset\) for each state, so \(m \geq n\).
Our complexity results are summarised in \cref{tab:algo-complexities}.
We restrict the best-case runtime analyses to tasks where a safe policy exists; if no safe policy exists then all the algorithms we consider can be implemented with an early stop, in which case they terminate in one or two steps, i.e., \(O(1)\).

\begin{table}[t!]
\center
\begin{tabular}{lll}
                                                                              & Best case                           & Worst case         \\ \hline
\TarjanSafe~\report{(\cref{alg:tarjansafe})}                                    & \(\Theta(|\p_{\text{min-safe}}|)\)  & \(\Omega(2^{m})\)  \\
\optVI~\report{(\cref{alg:opt-vi})}                                           & \(\Theta(|\Ss_{f}|)\)               & \(\Theta(bm)\)     \\
\short{\nPI, \iPI} \report{\nPI~(\cref{alg:pi}), \iPI~(\cref{alg:better-pi})} & \(\Theta(|\p_{\text{min-safe}}|)\)  & \(O(bmn)\)         \\
\end{tabular}
\caption{
Best-case (where a safe policy exists) and worst-case runtimes of the algorithms we consider.
\report{We use \(m = |\A|\), \(n = |\Ss|\), branching factor of non-determinism \(b\), and \(|\p_{\text{min-safe}}|\) is a minimal safe policy's envelope size.}
}
\label{tab:algo-complexities}
\end{table}

\subsection{\TarjanSafe}\label{sec:tarjan-safe}

\TarjanSafe is JEA's algorithm for finding FOND state-avoiding policies.
\short{
It is a DFS with a labelling mechanism for states that have been proven safe or unsafe, and it detects SCCs similarly to Tarjan's algorithm~\cite{Tarjan1972}.
}
\report{
It is a DFS with a labelling mechanism for states that have been proven safe or unsafe, and it has a mechanism for detecting SCCs in its candidate policy using the stack and lowlink indices from Tarjan's algorithm~\cite{Tarjan1972}, hence the name \TarjanSafe.
We present pseudocode for the algorithm in \cref{alg:tarjansafe}.\footnote{JEA present a more general algorithm where a \textit{safety radius} \(r\) can be specified, for us \(r = \infty\). Also, we assume our state-avoiding formulation. Our presentation has been simplified accordingly.}
}
\TarjanSafe's DFS is defined recursively: in its recursive cases, \TarjanSafe iterates over \(\ac \in \A(\s)\) and calls itself on the successor states \(\transit(\s,\ac)\), then it either marks \s as unsafe if all these actions lead to unsafe states, or otherwise the algorithm assumes that \s is safe and proceeds.
Additionally, \TarjanSafe detects SCCs and marks the SCC's root as safe if the SCC contains no unsafe states.
Then, the recursion stops at \s without making further recursive calls in the following cases:
\short{
\red{(TODO: fix with update from report)}
\begin{enumerate*}[label=(\arabic*), itemjoin=\, , ref=\arabic*]
\item \s is a fail state (\(\s \in \failStates\)); \label{item:tarjansafe-s-fail-state}
\item \s is marked as unsafe; \label{item:tarjansafe-s-unsafe-label}
\item \s is a goal state (not relevant in our setting); \label{item:tarjansafe-s-goal}
\item \s is marked as safe; \label{item:tarjansafe-s-safe-label}
\item all actions in \(\A(\s)\) are unsafe; \label{item:tarjansafe-s-unsafe-actions}
\item \s is the root of an SCC with only safe states. \label{item:tarjansafe-s-scc}
\end{enumerate*}
For details, we refer the reader to JEA.
We motivate \TarjanSafe's practical effectiveness with its best-case runtime and show that its worst case is exponential.
}

\report{
\begin{enumerate}
\item \s is a fail state (\(\s \in \failStates\)), \label{item:tarjansafe-s-fail-state}
\item \s marked as unsafe with \(\texttt{known-unsafe}[\s]\), \label{item:tarjansafe-s-unsafe-label}
\item \s is a goal state (not relevant in our setting), \label{item:tarjansafe-s-goal}
\item \s marked as safe with \(\texttt{known-safe}[\s]\), \label{item:tarjansafe-s-safe-label}
\item some action in \(\A(\s)\) leads only to states in the stack (\(\transit(\s,\ac) \subseteq \texttt{stack}\)). \label{item:tarjansafe-s-leads-to-stack}
\end{enumerate}

\begin{algorithm}[ht]
  \small

\DontPrintSemicolon

\function{\(\TarjanSafe(\fond, \sZ)\)}{
  \(\texttt{known-safe}[\s] \gets \textit{false} \; \forall \s \in \Ss\)\;
  \(\texttt{known-unsafe}[\s] \gets \textit{false} \; \forall \s \in \Ss\)\;
  $\texttt{stack} \gets \text{empty stack}$\;
  $\texttt{low} \gets \text{empty map}$\;
  \Return $\textit{rec-\TarjanSafe}(\sZ)$\;
}

\function{\(\textit{rec-}\TarjanSafe(\s)\)}{
  \tcp{Has access to \TarjanSafe's vars}
  \If{$s \in \failStates$ \textbf{or} $\texttt{known-unsafe}[s]$} {
    \Return \textit{false} \;
  }
  \If{$\texttt{known-safe}[s]$}{
    \Return \textit{true} \;
  }
  $\texttt{stack-idx} \gets \|\texttt{stack}\|$ \;
  push \s onto \texttt{stack} \;
  $\texttt{low}[\s] \gets \texttt{stack-idx}$\;
  $\texttt{A-unsafe} \gets \textit{true}$\;
  \ForEach{$\ac \in \A(\s)$ \textbf{and while} $\texttt{A-unsafe}$}{
    $\texttt{a-safe} \gets \textit{true}$\;
    \ForEach{$\succState\in \transitions(\state, a)$ \textbf{and while} $\texttt{a-safe}$}{
        \If{$\succState\in \texttt{stack}$}{
          $\texttt{low}[\s] \gets \min(\texttt{low}[\s], \texttt{low}[\succState])$\;
        }
        \Else{
          $\texttt{a-safe}\gets \textit{rec-\TarjanSafe}(\succState)$\;
        }
      }
      $\texttt{A-unsafe} \gets \neg\texttt{a-safe}$\;
  }
  \If{\(\texttt{A-unsafe}\)}{
    $\texttt{known-unsafe}[s] \gets \textit{true}$
  }
  \Else{
    \tcp{Update \texttt{safe} at root of SCC}
    \If{$\texttt{low}[\s] = \texttt{stack-idx}$}{
      $\texttt{known-safe}[\s] \gets \textit{true}$\;
      pop from stack all states down to $\s$\;
    }
  }
  pop $\s$ from stack\;
  \Return \(\neg \texttt{A-unsafe}\)\;
}

  \caption{JEA's \TarjanSafe}
  \label{alg:tarjansafe}
\end{algorithm}

We motivate \TarjanSafe's practical effectiveness with its best-case runtime.
}

\begin{theorem} \label{thm:tarjansafe-best-case}
\TarjanSafe has a best-case runtime of \(\Theta(|\p_{\text{min-safe}}|)\) (if a safe policy exists).
\end{theorem}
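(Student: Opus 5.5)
Proving a best-case bound of \(\Theta(|\p_{\text{min-safe}}|)\) takes two halves: a lower bound that holds on every task admitting a safe policy, and a matching upper bound attained on some favourable family of tasks (those where the action order in \cref{alg:tarjansafe} happens to be good). I read \(|\p_{\text{min-safe}}|\) as the number of states reachable from \(\sZ\) in the policy graph \(\fond^{\p_{\text{min-safe}}}\).

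\emph{Lower bound.} I will argue that whenever \TarjanSafe returns \textit{true} on \(\sZ\), the set of states on which \(\textit{rec-}\TarjanSafe\) was called contains the envelope of some safe policy from \(\sZ\). Define \(\p\) by letting \(\p(\s)\) be the action that ended the loop over \(\A(\s)\) with \(\texttt{a-safe} = \textit{true}\), for each visited \(\s\) that was not marked unsafe.
\begin{enumerate}
\item For such an action, every successor \(\succState \in \transit(\s,\p(\s))\) was examined. Each successor was either on the stack or returned \textit{true} from a recursive call. In both cases the successor was itself visited and not marked unsafe, so \(\p\) is defined on it and is closed under successors.
\item The fail-state check in the algorithm guarantees that no fail state is ever assigned an action, and no fail state lies in this closed set.
\item The envelope of \(\p\) from \(\sZ\) is therefore contained in the visited states, and \(\p\) is safe.
\end{enumerate}
Since \(\p\) is safe, its envelope has at least \(|\p_{\text{min-safe}}|\) states. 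Each visit costs \(\Omega(1)\), so every execution runs in time \(\Omega(|\p_{\text{min-safe}}|)\).

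\emph{Upper bound.} Take a task where, for each state in the envelope of \(\p_{\text{min-safe}}\), the first action in the iteration order of \(\A(\s)\) is \(\p_{\text{min-safe}}(\s)\). Then \TarjanSafe's DFS follows exactly \(\fond^{\p_{\text{min-safe}}}\), and I claim the following.
\begin{enumerate}
\item Every successor is safe and not a fail state, so each recursive call returns \textit{true}. The first action is never abandoned, and no other action is ever tried.
\item A state is only expanded when it is neither on the stack nor marked \texttt{known-safe}.
\item When an SCC root finishes, every state of that SCC is popped and the root is marked safe. Non-root members leave the stack without being marked, so one must check that they are not re-entered later.
\end{enumerate}
Each envelope state is expanded once and does \(O(b)\) work, giving total time \(\Theta(|\p_{\text{min-safe}}|)\), with \(b\) treated as a constant of the family; for example, take \(b = 1\) in the favourable instance.

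The main obstacle is the unmarked SCC members in step 3 of the upper bound. A cross edge into an already-completed SCC at a non-root state could trigger re-expansion. I would first handle this with a simple witness family: tasks whose minimal safe policy graph is a simple chain ending in a self-loop. This suffices because a best-case bound only needs one favourable instance per size. Alternatively, one can argue that in a policy graph explored in DFS order, a completed SCC can only be re-entered through its members; if that argument fails, I fall back to the chain family.
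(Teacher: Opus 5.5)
Your upper bound matches the paper's. The paper also lets \TarjanSafe{} pick \(\p_{\text{min-safe}}(\s)\) first, and restricts to instances whose minimal safe policy graph is a tree with self-loops at the leaves; your chain is a special case of that family. Treat the restriction as essential and drop the ``alternatively'' route. In \cref{fig:tarjan-killer-2} every state has one action, so \TarjanSafe{} trivially follows \(\p_{\text{min-safe}}\). Yet by the paper's worst-case theorem it takes \(2^d\) steps, because each non-root state is popped when its own call returns and is then re-expanded inside the still-open SCC. That is not a matter of re-entering completed SCCs, so your alternative would not address it.

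The genuine gap is in your lower bound, where you depart from the paper. Step~1 is false as stated. A successor that was on the stack when examined can later exhaust its actions and be marked \texttt{known-unsafe}; this happens whenever the DFS backtracks out of a cycle. So the visited, not-unsafe states need not be closed under your \(\p\). And since states are re-expanded, ``the action that ended the loop'' is not well defined. What you actually need is that the \(\p\)-envelope from \(\sZ\) avoids all such states. That is precisely the soundness of \TarjanSafe{}, and it hinges on the low-link bookkeeping, which your argument never uses.

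Your steps hold verbatim for a variant that updates \(\texttt{low}[\s]\) only on stack hits and not from returning children (which is how \cref{alg:tarjansafe} is printed), and that variant is unsound. Take \(\sZ = r\) with \(\transit(r,c_1) = \{u\}\) and \(\transit(r,c_2) = \{v\}\), single actions \(u \to \{v, x\}\) (effects tried in that order), \(v \to \{w\}\), \(w \to \{u\}\), and \(x \in \failStates\). The run then does the following:
\begin{enumerate}
\item it reaches \(w\) and finds \(u\) on the stack;
\item it marks \(v\) \texttt{known-safe}, because \(\texttt{low}[v]\) never changed;
\item it marks \(u\) unsafe via \(x\);
\item it tries \(c_2\), finds \(v\) marked \texttt{known-safe}, and returns \textit{true}, although \(r\) is unsafe.
\end{enumerate}

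The paper's lower bound avoids all of this with a one-line argument: certifying safety requires at least inspecting every state of some safe policy's graph, which has at least \(|\p_{\text{min-safe}}|\) states. To make that rigorous, take \TarjanSafe's correctness as given and argue by adversary. The run inspects only states on which \textit{rec-}\TarjanSafe{} is called, so turning every other state into a fail state leaves the run and its answer unchanged. Correctness then yields a safe policy whose envelope lies inside the called-on set, hence \(\Omega(|\p_{\text{min-safe}}|)\) calls.
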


\short{
\textit{Proof sketch.}
In the best case, \TarjanSafe will correctly guess \(\p_{\text{min-safe}}\) and traverse \(\p_{\text{min-safe}}\)'s policy graph once with its DFS to prove that all states are safe.
}

\report{
\begin{proof}
Suppose \(\p_{\text{min-safe}}\) is a safe policy with minimal policy graph such that the policy graph is a tree, except its leaf nodes which have self-loops.
Further suppose that \TarjanSafe guesses \(\p_{\text{min-safe}}\), i.e., in its DFS it always selects \(\p_{\text{min-safe}}(\s)\) for \s.
Then, the algorithm traverses \(\p_{\text{min-safe}}\) precisely once with its DFS to prove that the states are safe, and does not need to do any branching nor backtracking on the actions choices, so it is \(O(|\p_{\text{min-safe}}|)\).
To verify that a policy \p is safe, one must at least check that each state in its policy graph is not a fail state \failStates, so it is impossible to do better than \(\Omega(|\p_{\text{min-safe}}|)\).
Thus, \TarjanSafe's best-case must be \(\Theta(|\p_{\text{min-safe}}|)\).
\end{proof}
}

\report{
But in the worst case, \TarjanSafe is exponential.
}

\begin{theorem}
\TarjanSafe has a worst-case runtime of \(\Omega(2^m)\).
\end{theorem}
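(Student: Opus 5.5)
The plan is to exhibit a family of state-avoiding tasks \(\fond_k\), parameterised by \(k\), on which \textit{rec-}\TarjanSafe is called on the same state exponentially many times. The mechanism to exploit is visible in \cref{alg:tarjansafe}. When a state \s returns \textit{true} but is not the root of an SCC, i.e.\ \(\texttt{low}[\s] < \texttt{stack-idx}\) because of a back-edge to a state deeper in the stack, it is popped \emph{without} being marked \(\texttt{known-safe}\). If its caller afterwards rejects the current action because of a later effect, \s carries no label and its whole subtree is re-explored under the caller's next action. I will arrange for this to happen at every level of a chain.

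\textbf{Construction.} The states are \(\s_0, \s_1, \dots, \s_k\) and one fail state \(f \in \failStates\).
\begin{itemize}
\item For \(1 \le i < k\), state \(\s_i\) has two actions \(x_i, y_i\) with \(\transit(\s_i,x_i) = \transit(\s_i,y_i) = \{\s_{i+1}, f\}\). Successors are iterated in the listed order, which is allowed since the worst case may fix adversarial orderings.
\item State \(\s_k\) has a single action with effect \(\{\s_0\}\).
\item State \(\s_0 = \sZ\) has an action with effect \(\{\s_1\}\), followed by a self-loop action \(\{\s_0\}\), so that a safe policy exists; this last choice is optional.
\end{itemize}
This gives \(m = 2(k-1) + 3\) actions, i.e.\ \(k = \Theta(m)\).

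\textbf{Key invariant.} I will prove by downward induction on \(i\) the following claim. Whenever \textit{rec-}\TarjanSafe\((\s_i)\) is called with \(\s_0\) on the stack and none of \(\s_i,\dots,\s_k\) labelled or on the stack, then:
\begin{itemize}
\item for \(i = k\) the call returns \textit{true} and leaves \(\s_k\) unlabelled, because the back-edge to \(\s_0\) sets \(\texttt{low}[\s_k]\) to \(\s_0\)'s index;
\item for \(1 \le i < k\) the call explores \(\s_{i+1}\) under \(x_i\), which returns \textit{true} unlabelled with low-link pointing at \(\s_0\). It then hits \(f\), so \(x_i\) fails, and explores \(\s_{i+1}\) afresh under \(y_i\), since \(\s_{i+1}\) was popped and is unlabelled. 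That action fails the same way, so \(\s_i\) is marked unsafe and returns \textit{false}.
\end{itemize}
Two side conditions make the recursion repeat. The label \(\texttt{known-unsafe}[\s_{i}]\) is only set at the end of the call, and the states \(\s_{i+1},\dots,\s_k\) inside it are popped without labels, so the hypothesis holds again for the second sub-call. Writing \(T(i)\) for the number of calls made inside \textit{rec-}\TarjanSafe\((\s_i)\), this gives \(T(k) = 1\) and \(T(i) \ge 2T(i+1) + 1\). Hence \(T(1) \ge 2^{k-1}\), and the call from \(\s_0\) does \(2^{\Theta(m)}\) work while \(n = k+2\).

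\textbf{Main obstacle.} I expect the delicate step to be the exact bookkeeping in \cref{alg:tarjansafe}. The call on \(\s_{i+1}\) must really end with \(\texttt{low}[\s_{i+1}] \neq \texttt{stack-idx}\), so that no \(\texttt{known-safe}\) label is set. Moreover, the min-update with \(\texttt{low}\) must propagate \(\s_0\)'s index up through the chain. This propagation happens only via the ``\(\succState \in \texttt{stack}\)'' branch, so I may need to add an extra effect from each \(\s_i\) back to \(\s_0\), or argue that \(\texttt{low}\) is read after the recursive return. I would check this first and patch the gadget by adding the back-edge \(\s_0\) to each \(\transit(\s_i,\cdot)\) before \(f\) if necessary.

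\textbf{Constant in the exponent.} The construction yields \(2^{m/2 - O(1)}\) as stated. To match \(\Omega(2^m)\) literally, I would either read the bound as \(2^{\Omega(m)}\), or have each action serve as both branches by using effect sets \(\{\s_{i+1}, \s'_{i+1}, f\}\) with duplicated chains. I would present the simple gadget and remark on the constant.
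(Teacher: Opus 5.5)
Your gadget fails at the inductive step, which contradicts your own inductive hypothesis. For $1 \le i < k$ you assume that the sub-call on $s_{i+1}$ under $x_i$ returns \textit{true} unlabelled. But for $i+1<k$, your claim at level $i+1$ says this sub-call returns \textit{false} and sets $\texttt{known-unsafe}[s_{i+1}]$. More bookkeeping cannot fix this. Every $s_i$ with $i<k$ is genuinely unsafe, because both $x_i$ and $y_i$ have $f$ as an outcome, and \TarjanSafe memoises unsafety.

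Concretely, at $s_{k-1}$ both actions do re-explore $s_k$, but $s_{k-1}$ is then labelled unsafe. At $s_{k-2}$, the action $x_{k-2}$ is rejected as soon as $s_{k-1}$ returns \textit{false}; the inner loop stops before it reaches $f$. Under $y_{k-2}$, the call on $s_{k-1}$ returns at once through the $\texttt{known-unsafe}$ check. Hence $T(i)=T(i+1)+O(1)$ for $i<k-1$, and the total work is $O(k)$. Adding $s_0$ as an extra effect does not change this, since the chain states remain unsafe.

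The missing idea is that \TarjanSafe only re-expands states that return \textit{true} without a label, i.e.\ states whose low-link points to an ancestor still on the stack. The exponential count must therefore come from many distinct DFS paths reaching such safe-but-unlabelled states. The paper does this with no fail states at all. It uses a layered graph with two states per layer, each with one action whose two nondeterministic outcomes are the two states of the next layer, and the last layer leads back to $s_0$. Then only $s_0$ is an SCC root, and the DFS enumerates all $2^d$ paths.

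Your gadget can be repaired in the same spirit by making the second action safe, e.g.\ $\mathcal{T}(s_i,y_i)=\{s_{i+1}\}$. Then every $s_i$ returns \textit{true} unlabelled, the precondition of your invariant holds again for the second sub-call, and $T(i)\ge 2T(i+1)+1$ genuinely holds.

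Your worry about $\texttt{low}$ is well founded. As printed, \cref{alg:tarjansafe} only updates $\texttt{low}$ in the in-stack branch, while the paper's argument relies on it identifying SCC roots as in Tarjan's algorithm. Your direct back-edge to $s_0$ is a valid way around this. Your remark on the constant is also fine: the paper's construction has $m=2d+2$ actions and $2^d$ paths, so it likewise only yields $2^{m/2-O(1)}$.
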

\begin{proof}
We present a class of problems where \TarjanSafe requires on the order of \(2^m\) steps, which proves that \TarjanSafe's worst-case is asymptotically \(2^m\) or worse, i.e., \(\Omega(2^m)\).
Consider the task in \cref{fig:tarjan-killer-2} with 5 layers, and consider the generalised construction with \(d\) layers.
This task has no fail, unsafe, nor goal states, so \TarjanSafe's cases \ref{item:tarjansafe-s-fail-state}--\ref{item:tarjansafe-s-goal} will never trigger for any state.
Case \ref{item:tarjansafe-s-safe-label} can only be triggered at an SCC's root, which is only \(\s_0\), so this case only triggers when the algorithm terminates.
Thus, the recursion can only bottom out at state \s if \s's applicable action leads to states that are already on the \(\texttt{stack}\), which in turn only happens at the state in the last layer, leading to \(\s_0\).
This forces the algorithm to re-expand states and enumerate all paths.
In our task with \(d\) layers, \TarjanSafe must enumerate all paths from \(\s_0\) to \(\s_{d+1}\) before it can terminate, which is \(2^d\) paths.
The problem has \(2d+2\) states and \(2d+2\) actions with a branching factor \(b\) of 2, so indeed the runtime of the algorithm is exponential \wrt the transition system size.
\end{proof}

\report{
We now show that with other approaches our problem can in fact be solved in polynomial time, even in the worst case.
}

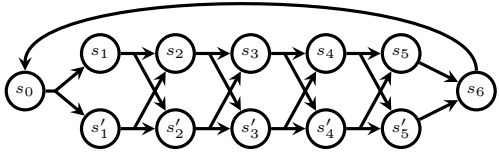
\begin{figure}
\center
\begin{tikzpicture}[>=stealth, node distance=1.5cm, very thick, minimum size=0.5cm, inner sep=0pt]
  \def\W{4}
  \def\H{1}

  \node[circle, draw, inner sep=2pt] (n-1) at (-1,0.5) {\tiny \(\s_0\)};
  \node[circle, draw, inner sep=2pt] (n-\the\numexpr\W+1\relax) at (\the\numexpr\W+1\relax,0.5) {\tiny \(\s_{\the\numexpr\W+2\relax}\)};

  \foreach \x in {0,...,\W}
    \node[circle, draw, inner sep=2pt] (n\x-0) at (\x,0) {\tiny $\raisebox{-1mm}{\smash{\(\s'_{\the\numexpr\x+1\relax}\)}}$};
  \foreach \x in {0,...,\W}
    \node[circle, draw, inner sep=2pt] (n\x-1) at (\x,1) {\tiny \(\s_{\the\numexpr\x+1\relax}\)};

  \path (n-1) ++(0.4,0.0) coordinate (aux-1);
  \draw[-] (n-1) -- (aux-1);
  \foreach \y in {0,...,\H}
    \draw[->] (aux-1) -- (n0-\y);

  \foreach \y in {0,...,\H}
    \draw[->] (n\W-\y) -- (n-\the\numexpr\W+1\relax);

  \foreach \x in {0,...,\numexpr\W-1}
    \foreach \y in {0,...,\H}
      \draw[->] (n\x-\y) -- (n\the\numexpr\x+1\relax-\y);

  \foreach \x in {0,...,\numexpr\W-1}
    \foreach \y in {0,...,\numexpr\H-1}
      {
        \path (n\x-\y) ++(0.45,0.0) coordinate (aux\x-\y);
        \draw[->] (aux\x-\y) -- (n\the\numexpr\x+1\relax-\the\numexpr\y+1\relax);
      }

  \foreach \x in {0,...,\numexpr\W-1}
    \foreach \y in {1,...,\numexpr\H}
      {
        \path (n\x-\y) ++(0.45,0.0) coordinate (aux\x-\y);
        \draw[->, out=0] (aux\x-\y) -- (n\the\numexpr\x+1\relax-\the\numexpr\y-1\relax);
      }

  \draw[->, out=90, in=90, looseness=0.5] (n-\the\numexpr\W+1\relax) to (n-1);

\end{tikzpicture}
\caption{Non-deterministic task with 5 layers and \(2^5\) paths from \(\s_0\) to \(\s_6\). Similarly constructed tasks with \(d\) layers have \(2^d\) paths from \(\s_0\) to \(\s_{d+1}\).}
\label{fig:tarjan-killer-2}
\end{figure}

\subsection{Unsafety Propagation}\label{sec:linear-algo}

\short{
\optVI solves state-avoiding tasks in linear time by propagating the unsafe region.
That is: (1) \optVI starts with a subset of the unsafe region \(\mathcal{Z} \gets \failStates\), (2) it selects \((\s, \ac)\) such that \(\transit(\s,\ac) \cap \mathcal{Z} \neq \emptyset\) and marks these \((\s, \ac)\) as unsafe, (3) if all actions in \(\A(\s)\) are unsafe then \s gets added to \(\mathcal{Z}\), (4) the algorithm iterates until no more \((\s, \ac)\) lead into \(\mathcal{Z}\).
Upon termination \(\mathcal{Z} = \mathcal{U}\), and thereby, a safe policy exists from \s iff \(\s \not\in \mathcal{Z}\).
\optVI adapts the linear-time algorithm of \citet{Diekert2022:reachability-games} for solving reachability games.

\begin{theorem}
\optVI has a best-case runtime of \(\Theta(|\failStates|)\) (if a safe policy exists) and a worst-case runtime of \(\Theta(bm)\).
\end{theorem}

\textit{Proof sketch.}
\optVI adds each \(\s \in \mathcal{U}\) to \(\mathcal{Z}\) precisely once.
\optVI's best case is \(\mathcal{U} = \failStates\), giving \(\Theta(|\failStates|)\).
The worst case must consider that \(\s \in \mathcal{U}\) may be considered multiple times if there are \(\ac \in \A(\s)\) so that \(\mathcal{T}(\s,\ac) \cap \mathcal{U} \neq \emptyset\) in multiple steps.
We take this into account with \(bm\): the number of actions \(m\) and their non-deterministic branching \(b\).

This algorithm has the lowest worst-case complexity of the algorithms we consider in this paper, and it seems unlikely that it is possible to do better.
However, in our setting, the unsafe region to propagate is very large (exponential \wrt problem description), making \optVI impractical.
}

\report{
We define \optVI, and show that it solves state-avoiding tasks in linear time by propagating the unsafe region.
This result draws from reachability games --- such games have two players who alternate turns: one player must reach certain states and their opponent must make sure that these states are never reached.
This framework is immediately analogous to our setting where the fail states \failStates correspond to the winning states of an ``unsafety player,'' and the ``safety player'' wins by ensuring that the unsafety player never wins --- a winning strategy for the safety player corresponds to a safe policy for us.
The unsafety player can be understood as an adversarial choice of non-deterministic outcomes.\footnote{FOND fairness is not violated with this interpretation, since the unsafety player will never voluntarily choose to get trapped in a cycle, as they are trying to reach unsafe states in finite steps.}
The existence of a linear-time algorithm is considered folklore for reachability games, e.g., \cite[Exercise~2.6]{Mazala2002:games}.
\citet{Diekert2022:reachability-games} present a concrete linear-time algorithm for solving reachability games.

In \cref{alg:opt-vi}, we define \optVI by adapting their algorithm to our state-avoiding FOND setting and formulating it in the framework of Value Iteration.
Effectively, \optVI is propagating the unsafe region from the known unsafe states (initially only the fail states), and then we know that a safe policy from \sZ exists iff \sZ is outside the unsafe region.
Thus, upon termination, \(\V(\s) = 1\) iff \s is unsafe.
Be aware that throughout execution the semantic of \V is slightly different: we have that \(\V(\s) = 1\) iff \s is proved unsafe, and \(\V(\s) = 0\) indicates that we have not proved \s either way.

\begin{algorithm}[]
{\small
\DontPrintSemicolon
  \caption{Propagate Unsafe States}\label{alg:opt-vi}
  \function{\(\optVI(\fond, \sZ)\)} {
    \tcp{Here \Q is a separate variable, not a function of \V}
    \(\V(\s) \gets \iverson{ \s \in \failStates } \; \forall \s \in \Ss\) \;
    \(\Qsa \gets 0 \; \forall \s \in \Ss, \ac \in \A(\s)\) \;
    \(\mathcal{Z} \gets \failStates\) \;
    \While{\(\mathcal{Z} \neq \emptyset\)}{
      \(\s' \gets \mathcal{Z}.\text{pop}()\) \;
      \For{\((\s, \ac) : \s' \in \mathcal{T}(\s,\ac)\)}{
        \(\Qsa \gets 1\) \;
        \If{\(\V(\s) = 0 \text{ and } \min_{\ac' \in \A(\s)} \Q(\s,\ac') = 1\)}{
          \(\V(\s) \gets 1\) \;
          \(\mathcal{Z} \gets \mathcal{Z} \cup \{\s\}\) \;
        }
      }
    }
    \Return \(\V(\sZ)\) \;
  }
}
\end{algorithm}

\begin{theorem}
\optVI~(\cref{alg:opt-vi}) returns \(0\) if \sZ is safe and \(1\) if \sZ is unsafe.
\end{theorem}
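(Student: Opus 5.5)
We prove correctness of \optVI via two invariants, soundness and completeness, showing that upon termination \(\V(\s)=1\) exactly for \(\s \in \mathcal{U}\).
Let \(\mathcal{M} = \{\s : \V(\s)=1\}\) denote the set of states marked by the algorithm.
It is convenient to characterise \(\mathcal{U}\) as the least fixed point of the operator \(X \mapsto \failStates \cup \{\s : \forall \ac \in \A(\s)\; \transit(\s,\ac) \cap X \neq \emptyset\}\), i.e., the attractor of \failStates for the ``unsafety player'' in the reachability-game reading.
The complement of this least fixed point is then a set from which some action always stays inside the complement, which directly yields a safe policy.

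\textbf{Soundness: \(\mathcal{M} \subseteq \mathcal{U}\).}
We show by induction over the order in which states enter \(\mathcal{M}\) that every marked state is unsafe, and simultaneously that \(\Qsa = 1\) implies \(\transit(\s,\ac) \cap \mathcal{M} \neq \emptyset\).
The base case is \(\failStates \subseteq \mathcal{U}\), which holds trivially.
In the inductive step, \(\Qsa\) is set to \(1\) only when some \(\s' \in \transit(\s,\ac)\) has been popped from \(\mathcal{Z}\), and such \(\s'\) is already marked, hence unsafe by the induction hypothesis.
A state \s is marked only when every \(\ac \in \A(\s)\) has \(\Qsa = 1\).
Then for any policy \p, the action \(\p(\s)\) has a successor in \(\mathcal{U}\), from which \(\paths^{\p}\) reaches \failStates; prepending \(\s, \p(\s)\) gives a path from \s hitting \failStates.
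Hence \(\s \in \mathcal{U}\).

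\textbf{Completeness: \(\mathcal{U} \subseteq \mathcal{M}\) at termination.}
This is the main step.
We argue the contrapositive by constructing a safe policy from every \(\s \notin \mathcal{M}\).
First we establish the key invariant: every state ever added to \(\mathcal{Z}\) is eventually popped, and when \(\s'\) is popped, all predecessor pairs \((\s,\ac)\) with \(\s' \in \transit(\s,\ac)\) get \(\Qsa = 1\).
It follows that at termination \(\Qsa = 1\) iff \(\transit(\s,\ac) \cap \mathcal{M} \neq \emptyset\).
We also need that the marking test is re-evaluated whenever a relevant \Q changes.
The last \Q-value of \s to flip to \(1\) is flipped inside the loop that immediately checks the \(\min\), so no state with all \(\Q\)-values equal to \(1\) is missed.
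Therefore, at termination, every \(\s \notin \mathcal{M}\) has some action \(\ac\) with \(\transit(\s,\ac) \cap \mathcal{M} = \emptyset\).
Define \(\p(\s)\) to be such an action for every \(\s \notin \mathcal{M}\).
Every path under \p from an unmarked state then stays in \(\Ss \setminus \mathcal{M}\), which is disjoint from \failStates since \(\failStates \subseteq \mathcal{M}\) by initialisation.
So \p is safe from every unmarked state, and thus \(\mathcal{U} \subseteq \mathcal{M}\).

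\textbf{Termination.}
Each state enters \(\mathcal{Z}\) at most once, because the guard \(\V(\s)=0\) is checked before \(\V(\s)\) is set to \(1\).
The for-loop over predecessor pairs is finite, so the while-loop terminates.

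Combining soundness and completeness gives \(\V(\sZ)=1\) iff \(\sZ \in \mathcal{U}\), as claimed.
The subtle point to check carefully is the completeness invariant: we must verify that the single check at the moment of the last \Q-update suffices.
This holds because \Q-values are monotone, only ever changing from \(0\) to \(1\).
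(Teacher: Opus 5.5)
Your proof is correct and follows essentially the same route as the paper: soundness by induction on the order in which states are marked, and completeness via the policy that picks, at every unmarked state, an action \(a\) with \(Q(s,a)=0\); the paths of this policy stay among unmarked states and so never reach \failStates. The paper phrases completeness as a contradiction (a chain of unsafe unmarked states that would have to reach a fail state), whereas you construct the safe policy directly and spell out invariants the paper leaves implicit: at termination \(Q(s,a)=1\) iff \(\mathcal{T}(s,a)\) contains a marked state, and the marking test is re-run after every \(Q\)-update. Your opening fixed-point characterisation of \(\mathcal{U}\) is never used and can be dropped.
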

\begin{proof}
We want to show that a state \s updates \(\V(\s) \gets 1\) and enters \(\mathcal{Z}\) iff it is unsafe.
Only unsafe states enter \(\mathcal{Z}\) by an inductive argument: fail states \failStates are the base case, and otherwise \s only enters \(\mathcal{Z}\) if all its successors are unsafe with \(\min_{\ac' \in \A(\s)} \Q(\s,\ac') = 1\).
It remains to show that all unsafe states will eventually enter \(\mathcal{Z}\).
For contradiction, suppose that an unsafe state does not enter \(\mathcal{Z}\), which implies that there is an unsafe state \s with \(\V(\s) = 0\).
This state must have an action \(\ac \in \A(\s)\) that leads to an unsafe state \(\s'\) but has \(\Qsa = 0\).
Consequently, we can select an unsafe state \(\s' \in \mathcal{T}(\s,\ac)\) with \(\V(\s') = 0\).
Then we can apply the same argument to \(\s'\), and repeat this argument to produce a chain \(\langle \s_0, \s_1, \dots \rangle\) of unsafe states with \(\V(\s_i) = 0\) for all \(\s_i\) in the chain.
By selecting \textit{some action} with \(\Qsa = 0\) we are describing a policy, and since each state we are considering is unsafe there must be a path in this policy graph that leads to a fail state.
So, it must be possible to extend our chain so that it eventually reaches a fail state, but fail states have \(\V(\s) = 1\), yielding the desired contradiction.
\end{proof}

\begin{theorem}
\optVI~{(\cref{alg:opt-vi})} has a runtime of \(\Omega(|\mathcal{Y}|)\) and \(O(b |\mathcal{Y}|)\) where \(\mathcal{Y} = \{(\s,\ac) : \s \in \Ss, \ac \in \A(\s), \mathcal{T}(\s,\ac) \cap \mathcal{U} \neq \emptyset\}\), i.e., \(\mathcal{Y}\) are the state-action pairs with an effect in the unsafe region.
\end{theorem}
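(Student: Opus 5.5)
Both bounds come from an accounting argument over the work done by \cref{alg:opt-vi}. The key observation, already established in the correctness proof of \optVI, is that a state enters \(\mathcal{Z}\) iff it is unsafe, i.e., iff it lies in \(\mathcal{U}\). It enters at most once, because the guard \(\V(\s) = 0\) is falsified as soon as \(\s\) is added. So the outer while-loop pops exactly \(|\mathcal{U}|\) states. The total cost is therefore the sum over \(\s' \in \mathcal{U}\) of the work in the inner for-loop over the predecessor pairs \(\{(\s,\ac) : \s' \in \mathcal{T}(\s,\ac)\}\). I will assume, as is standard for this kind of algorithm, that a predecessor index is available. I will also assume that \(\min_{\ac' \in \A(\s)} \Q(\s,\ac')\) is maintained in \(O(1)\) via a per-state counter of actions with \(\Q(\s,\ac) = 0\), decremented when some \(\Q(\s,\ac)\) flips from \(0\) to \(1\). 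Under these assumptions each inner iteration costs \(O(1)\).

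For the upper bound, I count inner iterations by charging each one to the pair \((\s,\ac)\) it visits. A pair is visited once for every popped \(\s' \in \mathcal{T}(\s,\ac) \cap \mathcal{U}\). Hence only pairs in \(\mathcal{Y}\) are ever visited, and each such pair is visited at most \(|\mathcal{T}(\s,\ac) \cap \mathcal{U}| \leq b\) times. Summing over \(\mathcal{Y}\) gives \(O(b|\mathcal{Y}|)\) inner iterations, so \(O(b|\mathcal{Y}|)\) total work. The initialisation of \(\V\) and \(\Q\) has to be handled too. I would either treat it as lazy, with default-zero maps that cost nothing until touched, or note that the statement concerns the propagation phase. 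I will state this convention explicitly, since an eager initialisation would add an \(O(m)\) term that is not covered by \(|\mathcal{Y}|\).

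For the lower bound, every \((\s,\ac) \in \mathcal{Y}\) has some successor \(\s' \in \mathcal{T}(\s,\ac) \cap \mathcal{U}\). By correctness that \(\s'\) is eventually popped from \(\mathcal{Z}\), and at that point the for-loop visits \((\s,\ac)\). Thus every pair of \(\mathcal{Y}\) is visited at least once, and the runtime is \(\Omega(|\mathcal{Y}|)\).

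I expect the main obstacle to be making the cost model precise rather than the counting itself. Specifically, the claimed bounds need two things. First, the test \(\min_{\ac'} \Q(\s,\ac') = 1\) must be \(O(1)\) amortised; computing it naively costs \(|\A(\s)|\) and would break the \(O(b|\mathcal{Y}|)\) bound. Second, enumerating predecessors of \(\s'\) must cost time proportional to the number of predecessors returned. I would resolve both by fixing the counter and predecessor-list implementation up front, in line with the linear-time reachability-game algorithm of \citet{Diekert2022:reachability-games}. As a side remark, the bound specialises to the \(\Theta(bm)\) worst case of \cref{tab:algo-complexities}, since \(|\mathcal{Y}| \leq m\).
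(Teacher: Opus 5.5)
Your proposal is correct and is essentially the paper's own accounting argument: each unsafe state enters \(\mathcal{Z}\) exactly once, so the inner loop visits every pair of \(\mathcal{Y}\) at least once and, because \(|\transit(\s,\ac) \cap \mathcal{U}| \leq b\), at most \(b\) times. Your explicit cost model (a predecessor index, an \(O(1)\) counter for the \(\min\) test, and the caveat about initialising \(V\) and \(Q\)) is more careful than the paper, which leaves all of this implicit; in the same spirit, note that popping fail states with no predecessors adds an \(O(|\failStates|)\) term that \(b|\mathcal{Y}|\) does not cover either.
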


\begin{proof}
Each unsafe state \(\s \in \mathcal{U}\) will get added to \(\mathcal{Z}\) precisely once (as argued before), so the algorithm's inner \textit{for} loop will precisely iterate over \(\mathcal{Y}\) with some potential repetitions if \((\s, \ac) \in \mathcal{Y}\) has multiple unsafe states in \(\transit(\s,\ac)\).
This gives \(\Omega(|\mathcal{Y}|)\).
To handle the repetitions, we observe that \(\transit(\s,\ac)\) can not contain more unsafe states than the non-deterministic branching factor \(b\).
Thus, we iterate over \(\mathcal{Y}\) with at most \(b\) duplicates, i.e., the algorithm is \(O(b |\mathcal{Y}|)\).
\end{proof}

\begin{corollary}
\optVI~{(\cref{alg:opt-vi})} has a best-case runtime of \(\Theta(|\Ss_{f}|)\) (if a safe policy exists) and a worst-case runtime of \(\Theta(bm)\).
\end{corollary}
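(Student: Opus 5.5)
The plan is to derive both bounds from the preceding theorem, which says \optVI runs in time \(\Omega(|\mathcal{Y}|)\) and \(O(b|\mathcal{Y}|)\), where \(\mathcal{Y}\) is the set of state-action pairs with an effect in \(\mathcal{U}\). It then suffices to bound \(|\mathcal{Y}|\) from each side and to exhibit instances attaining those bounds. The loop must also pop every element of \(\mathcal{Z}\), and \(\mathcal{Z}\) starts as \failStates. So in addition to the \(\mathcal{Y}\)-based bound, I will use that the runtime is at least \(|\mathcal{U}| \geq |\failStates|\), since each unsafe state is popped exactly once, as argued in the correctness theorem.

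\textbf{Worst case.} For the upper bound, \(\mathcal{Y} \subseteq \{(\s,\ac) : \s \in \Ss, \ac \in \A(\s)\}\), whose size is at most \(m\) if we read \(m\) as the number of applicable state-action pairs. The preceding theorem then gives \(O(bm)\), to which we add the \(O(|\failStates|) \subseteq O(n) \subseteq O(m)\) pops. For the matching lower bound I will construct a family of tasks in which every state-action pair has all \(b\) of its effects unsafe. For instance, take every state to be a fail state or unsafe, and let each action's \(b\) successors all lie in \(\mathcal{U}\). Then the inner loop visits each \((\s,\ac)\) exactly \(b\) times, once per unsafe successor popped, giving \(\Omega(bm)\) and hence \(\Theta(bm)\).

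\textbf{Best case.} Here we assume a safe policy from \sZ exists, so \(\sZ \notin \mathcal{U}\). The runtime is at least the number of pops, which is \(|\mathcal{U}| \geq |\failStates|\), so \(\Omega(|\failStates|)\) holds on every instance. For the matching upper bound I will exhibit a task in which \(\mathcal{U} = \failStates\) and each fail state has only \(O(1)\) incoming state-action pairs, each contributing one iteration. A simple example is \sZ with a safe self-loop action, plus a separate action whose successors are the fail states, where the fail states are reached from nowhere else. Then \(|\mathcal{Y}| = O(|\failStates|)\) and the total work is \(\Theta(|\failStates|)\).

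The main obstacle is bookkeeping rather than mathematics. I need to make sure the inner \textit{for} loop over predecessors \((\s,\ac) : \s' \in \transit(\s,\ac)\) is charged correctly: this requires assuming precomputed predecessor lists, so that enumeration costs time linear in the number of predecessors. I also need to check that the \(\min_{\ac'} \Q(\s,\ac')\) test is \(O(1)\), by keeping a counter of unsafe actions per state, and that the initialisation of \V and \Q is either not counted or is performed lazily. Otherwise the best case would degrade to \(\Theta(m)\). I would state these implementation assumptions explicitly, following \citet{Diekert2022:reachability-games}, and then the corollary follows immediately.
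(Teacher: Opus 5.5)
Your proposal is correct and is essentially the paper's argument: the corollary is read off from the preceding $\Omega(|\mathcal{Y}|)$ and $O(b|\mathcal{Y}|)$ bounds, together with the fact that each unsafe state is popped from $\mathcal{Z}$ exactly once, with the best case arising when $\mathcal{U} = \failStates$ and the worst case when all $bm$ transitions lead into $\mathcal{U}$. Your explicit extremal instances and stated implementation assumptions (precomputed predecessor lists, per-state counters for the $\min$ test, lazy initialisation) make precise what the paper leaves implicit, in particular that $\mathcal{U} = \failStates$ alone gives $\Theta(|\failStates|)$ only if the number of transitions into $\failStates$ is also $O(|\failStates|)$ and the $\Theta(n+m)$ initialisation is not charged.
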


If no safe policy exists from the initial state \sZ then \optVI can stop as soon as \sZ enters \(\mathcal{Z}\), which may happen in a single step.

We call \optVI's worst-case runtime of \(\Theta(bm)\) linear, even though it contains two terms.
Recall that \(b\) corresponds to the non-deterministic branching factor, and \(m\) is the number of actions, so \(bm\) gives the number of transitions \(\langle \s, \ac, \s' \rangle\) in our state space.
In that sense, \optVI is linear \wrt the number of transitions.
Connecting to reachability games, \(bm\) corresponds to the number of edges in an equivalent game: the algorithm of \citet{Diekert2022:reachability-games} is linear over the number of edges in a game, so it is reasonable to consider our adaptation \optVI linear in our setting.

Since \optVI is in the VI framework, we can compare it to standard VI and variants like Topological VI~(TVI)~\cite{Dai2011:tvi}.
\optVI dominates VI and TVI in the sense that in its worst case it computes \(\Qsa\) once for each \(\s \in \Ss, \ac \in \A(\s)\), whereas VI and TVI perform the same computations in their best case, and usually more.

\optVI has the lowest worst-case complexity of the algorithms we consider in this paper, and it seems unlikely that it is possible to do better.
However, in our setting, the unsafe region and therefore \(\mathcal{Y}\) is very large (exponential \wrt problem description).
This makes \optVI impractical in practice.
It is possible to exploit problem structure and propagate the set of unsafe states with Binary Decision Diagrams (BDDs), as explained in \cref{sec:prop-unsafety-bdds}.
However, propagating unsafe states with BDDs still fails in our problems, because the BDDs explode in size after only a few steps of propagation.
It seems the approach of propagating unsafe states is fundamentally unsuitable to our problems.

In the next section, we present an alternative algorithm that combines the best of \TarjanSafe and \optVI: it can stop as soon as it finds a safe policy but maintains a polynomial worst-case runtime.
}

\short{
(TODO: shorten this and make it fit somewhere)
We call \optVI's worst-case runtime of \(\Theta(bm)\) linear, even though it contains two terms.
Recall that \(b\) corresponds to the non-deterministic branching factor, and \(m\) is the number of actions, so \(bm\) gives the number of transitions \(\langle \s, \ac, \s' \rangle\) in our state space.
In that sense, \optVI is linear \wrt the number of transitions.
Connecting to reachability games, \(bm\) corresponds to the number of edges in an equivalent game: the algorithm of \citet{Diekert2022:reachability-games} is linear over the number of edges in a game, so it is reasonable to consider our adaptation \optVI linear in our setting.
}

\subsection{Policy Iteration (PI)}\label{sec:policy-iteration}

Our implementation of PI in \cref{alg:pi}, named \nPI, makes a nice tradeoff: it has the same best case as \TarjanSafe and remains polynomial in the worst case.
PI is well-known as an MDP algorithm~\cite{Howard1960:pi}, and arguably, successful goal-reaching FOND algorithms such as PR2~\cite{Muise2024:PR2} fit within its framework.
In our setting, the idea of PI is that it starts with some policy \p, determines where \p is unsafe, fixes \p, and repeats until \p is safe or \sZ is proven unsafe.
We define \nPI using a Value Function \V with the semantic that \(\V(\s) = 1\) if we know that \s is unsafe, and \(\V(\s) = 0\) if we are unsure \report{(as in \optVI)}.
Initially, we only know that the fail states are unsafe so \(\V(\s) \gets \iverson{ \s \in \failStates }\)~(\cref{line:pi-v-init}).
We use the shorthand \(\Qsa = \max_{\s' \in \transit(\s,\ac)} \V(\s')\), which is \(1\) if we know it is unsafe to apply \ac in \s, and \(0\) if we are unsure.
Then, we propagate the states that we know are unsafe by updating \(\V(\s)\) with its minimal \qvalue~(\cref{line:pi-v-backup}).

\begin{algorithm}[]
{\small
\DontPrintSemicolon
  \caption{Na{\"i}ve Policy Iteration}\label{alg:pi}
  \function{\(\nPI(\fond, \sZ, \pInit)\)} {
    \(\V(\s) \gets \iverson{ \s \in \failStates} \; \forall \s \in \Ss\) \label{line:pi-v-init}\;
    \(\p \gets \pInit\) \;

    \RepeatForever{ \label{line:pi-loop-start}
      \(\V, \texttt{saw-unsafe} \gets \MarkUnsafe(\fond, \sZ, \V, \p)\) \;
      \If{\(\neg \texttt{saw-unsafe}\) or \(\V(\sZ) = 1\)}{
        \Return \(\V(\sZ)\) \;
      }
      \(\p \gets \Greedy(\fond, \sZ, \V)\) \;
    } \label{line:pi-loop-end}
  }

  \function{\(\MarkUnsafe(\fond, \sZ, \V, \p)\)}{
    \(\mathcal{E} \gets \text{DFS post order of policy graph \(\fond^{\p}\) from \sZ}\) \;
    \For{\(\s \in \mathcal{E}\)}{
      \( \V(\s) \gets \min_{\ac \in \A(\s)} \Qsa \) \label{line:pi-v-backup} \;
    }
    \(\texttt{saw-unsafe} \gets \iverson{ \exists \s \in \mathcal{E} \; \st \V(\s) = 1 }\) \;
    \Return \V, \texttt{saw-unsafe} \;
  }

  \function{\(\Greedy(\fond, \sZ, \V)\)} {
    new \p undefined everywhere \;
    \While{\p has undefined state reachable from \sZ} {
      \(\s \gets \) undefined state reachable from \sZ \;
      \(\p(\s) \gets \argmin_{\ac \in \A(\s)} \Qsa\) \;
    }
    \Return \p \;
  }
}
\end{algorithm}

We now analyse the runtime complexity of \nPI.

\begin{theorem}
\nPI\report{~(\cref{alg:pi})} has a best-case runtime of \(\Theta(\p_{\text{min-safe}})\) (if a safe policy exists).
\end{theorem}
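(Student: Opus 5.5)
The plan mirrors the proof of \cref{thm:tarjansafe-best-case}: an upper bound from a favourable instance together with a universal lower bound. I take ``best case'' to mean the most favourable choice of inputs, in particular of the initial policy \(\pInit\). For the upper bound I set \(\pInit = \p_{\text{min-safe}}\) and trace exactly one iteration of the loop in lines~\ref{line:pi-loop-start}--\ref{line:pi-loop-end}. \MarkUnsafe first computes the DFS post order \(\mathcal{E}\) of the policy graph \(\fond^{\p_{\text{min-safe}}}\) from \sZ. This visits each state of the policy graph once and each transition in \(\transit(\s,\p(\s))\) once, so it costs \(O(|\p_{\text{min-safe}}|)\), where I read \(|\p_{\text{min-safe}}|\) as the size of its policy graph.

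Next I have to argue that \(\texttt{saw-unsafe}\) is false. \(\V\) is initialised to \(\iverson{\s \in \failStates}\), and no state of a safe policy's graph is a fail state. The backup on line~\ref{line:pi-v-backup} can only set \(\V(\s)=1\) if every action at \s has \(\Qsa=1\). In particular \(\Q(\s,\p(\s))\) would have to be \(1\), which needs some successor in the policy graph to already have value \(1\). By induction along the post order, no state in \(\mathcal{E}\) ever gets value \(1\), so \(\texttt{saw-unsafe}\) is false and \nPI returns \(\V(\sZ)=0\) immediately. \Greedy is never called.

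The main obstacle is the cost of the backup itself. The backup minimises over all of \(\A(\s)\) and evaluates each \(\Qsa\) over all of \(\transit(\s,\ac)\), so it touches more than the policy graph. My first approach is to restrict the best-case instance, as was done for \TarjanSafe, to tasks where the states in \(\p_{\text{min-safe}}\)'s graph have a bounded number of applicable actions, with bounded branching. Under that restriction each backup costs \(O(1)\) and the total cost is \(O(|\p_{\text{min-safe}}|)\). As a fallback, I would note that the backup may short-circuit: it can evaluate \(\Q(\s,\p(\s))\) first and stop once it finds a \(0\), since the minimum cannot go lower. With that implementation detail each backup costs time proportional to the policy's own transitions, and the same bound follows without restricting \(\A(\s)\).

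For the lower bound I reuse the argument from \cref{thm:tarjansafe-best-case}. Any algorithm that certifies safety must at least inspect every state of some safe policy's graph to confirm that it is not in \failStates. Every safe policy's graph has at least \(|\p_{\text{min-safe}}|\) states by minimality, which gives \(\Omega(|\p_{\text{min-safe}}|)\). Combining the two bounds yields \(\Theta(|\p_{\text{min-safe}}|)\).
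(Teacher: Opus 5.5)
Your proposal is correct and follows the paper's proof. Setting \(\pInit = \p_{\text{min-safe}}\) means a single \MarkUnsafe pass over its policy graph sets no \(\V(\s)\) to \(1\), so \nPI returns immediately, and the \(\Omega(|\p_{\text{min-safe}}|)\) bound carries over from \cref{thm:tarjansafe-best-case}. Your handling of the backup on \cref{line:pi-v-backup}, which ranges over all of \(\A(\s)\), is extra care that the paper's one-line proof skips; the eager \(\Theta(n)\) initialisation of \V on \cref{line:pi-v-init} likewise needs lazy storage for the bound to hold, which both you and the paper leave implicit.
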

\short{
\textit{Proof sketch.}
Similar argument as for \cref{thm:tarjansafe-best-case}.
}
\report{
\begin{proof}
The argument is similar to the proof of \cref{thm:tarjansafe-best-case}.
In the best case, \(\p_\text{init}\) is \(\p_{\text{min-safe}}\), so that \nPI only needs to perform a DFS over \(\p_\text{init}\) once to evaluate the policy.
\end{proof}
}

\report{
If no safe policy exists then \nPI can stop as soon as it propagates the unsafe region to \sZ and sets \(\V(\sZ) \gets 1\).
With an early-stop mechanism (as implemented later in \optPI), this may happen in as few as two steps.
}

\begin{theorem}
\nPI\report{~(\cref{alg:pi})} has a worst-case runtime of \(O(bmn)\).
\end{theorem}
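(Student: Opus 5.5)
We bound the number of iterations of the main loop (\cref{line:pi-loop-start}--\cref{line:pi-loop-end}) by \(n\), and the cost of each iteration by \(O(bm)\). Their product gives \(O(bmn)\).

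\textbf{Cost per iteration.} \MarkUnsafe performs a DFS over the policy graph \(\fond^{\p}\) from \sZ. This visits each reachable state once and, per state, inspects at most \(b\) successors of \(\p(\s)\), so the DFS costs \(O(bn) \subseteq O(bm)\) since \(m \geq n\). Each state in \(\mathcal{E}\) then gets one backup \(\min_{\ac \in \A(\s)} \Qsa\), which costs \(O(b|\A(\s)|)\). Summed over distinct states this is at most \(\sum_{\s} b|\A(\s)| = O(bm)\). \Greedy has the same shape: it explores the states reachable from \sZ under the policy it is building and evaluates one argmin per state, again \(O(bm)\) in total. I would implement ``undefined state reachable from \sZ'' as a worklist or DFS so that no state is revisited.

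\textbf{Number of iterations.} This is the key step and the main obstacle. The invariant is that \(\V\) is monotone: entries only go from \(0\) to \(1\), and only for states that really are unsafe. The argument for the latter is the same induction as in the correctness proof of \optVI: a backup sets \(\V(\s)=1\) only if every action has a successor already marked \(1\).

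I then claim that each non-terminating iteration strictly increases \(|\{\s : \V(\s)=1\}|\). Suppose \(\MarkUnsafe\) returns \(\texttt{saw-unsafe}\) true while \(\V(\sZ)=0\). I must rule out the case where the only states with \(\V=1\) in \(\mathcal{E}\) were already marked in an earlier round, since that would loop forever. Here I would use the structure of \Greedy: the current \(\p\) was built greedily, so every state reachable under \(\p\) that had \(\V(\s)=0\) was assigned an action with \(\Q(\s,\p(\s))=0\). Hence a reachable state with \(\V=1\) either is a newly marked state or is a successor of a reachable state \(\s\) with \(\V(\s)=0\) whose chosen action had \(\Q=0\) at greedy time. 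The latter is impossible unless that successor was marked after greedy time, i.e.\ newly. A small special case needs care: \sZ itself, or states whose greedy argmin was already \(1\). The latter have \(\V=1\) after backup and would be caught, so I expect to handle it by noting that such states would have been marked in the previous \MarkUnsafe pass, because their successors were visited there. If this bookkeeping fails, I would instead argue directly that the post-order backup during the iteration following a \Greedy call marks at least one new state. The reason is that the saw-unsafe state reached is a fail state or a newly propagated one, and the DFS post order guarantees that propagation happens in a single pass.

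Given strict growth of the set of states marked unsafe, there are at most \(n\) iterations before either no unsafe state is seen or \(\V(\sZ)=1\). This yields \(O(n)\cdot O(bm) = O(bmn)\).
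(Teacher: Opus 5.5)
Your proposal is correct and follows the paper's route: both \MarkUnsafe and \Greedy cost $O(bm)$ per pass, and each non-terminating pass after a \Greedy call marks a fresh state. The reason is that a policy path from \sZ to a marked state has an edge $\s_i \to \s_{i+1}$ with $\V(\s_i)=0$ and $\V(\s_{i+1})=1$; if $\s_{i+1}$ had already been marked when \Greedy ran, the greedy choice at $\s_i$ would force the backup to mark $\s_i$, a contradiction, so $\s_{i+1}$ is new. Your remark that such states ``would be caught'' already closes that case, so drop the claims that they were marked in the previous pass and that post-order propagates unsafety in a single pass, since both fail on back edges; also note that the first pass uses the non-greedy \pInit and may mark nothing, which costs only one extra iteration.
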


\short{
\textit{Proof sketch.}
In each call to \MarkUnsafe, if \p remains unsafe, then at least one state with \(\V(\s) = 0\) gets updated with \(\V(\s) \gets 1\).
Thus, in the worst case, \nPI updates a single \(\V(\s)\) per iteration and then terminates.
This gives \(O(n)\) iterations with an \(O(bm)\) DFS in each pass.
}

\report{
\begin{proof}
The functions \MarkUnsafe and Greedy are both \(O(bm)\) because they are implemented with a variant of DFS that does not re-expand states, i.e., it expands each state at most once.
It remains to show that the main loop (lines \ref{line:pi-loop-start}--\ref{line:pi-loop-end}) repeats at most \(n\) times.
To this end we argue that for each pass of the loop where \p remains unsafe, at least one state with \(\V(\s) = 0\) gets updated with \(\V(s) \gets 1\); this can happen at most \(n\) times, because before then we either get a safe policy and terminate, or we set \(\V(\sZ) = 1\) and terminate.
Suppose we have computed \(\p \gets \Greedy(\fond, \sZ, \V)\) and \(\V, \text{\p-is-safe} \gets \MarkUnsafe(\fond, \sZ, \V, \p)\), and neither of the termination conditions are satisfied, i.e., \p is unsafe and \(\V(\sZ) = 0\).
We know that \(\V(\sZ) = 0\), since \p is unsafe its envelope must contain at least one state \(\s^{\dagger}\) with \(\V(\s^{\dagger}) = 1\), and there is a (potentially empty) sequence of intermediate states between \(\sZ\) and \(\s^{\dagger}\).
On that sequence, there must be a pair of states \(\s_i\) with \(\V(\s_i) = 0\) and \(\V(\s_{i+1}) = 1\) by a discrete intermediate value argument, i.e., if \(\V(\s)\) can only be \(0\) or \(1\), and at one end of the sequence we have \(\V(\sZ) = 0\) and at the other \(\V(\s^{\dagger}) = 1\), then somewhere it has to switch from \(0\) to \(1\).
Consequently, \(\Q \big( \s_i, \p(\s_i) \big) = 1\).
Recall that \p was constructed greedily, so \(\min_{\ac \in \A(\s_i)} \Qsa = 1\) which means that \(\V(\s_i)\) must be updated to \(1\), concluding the proof.
\end{proof}
}

The implementation in \cref{alg:pi} has various inefficiencies which we address in the \textbf{Improved Policy Iteration} algorithm \optPI\report{~(\cref{alg:better-pi})}.
The key change in \optPI is that the construction of a greedy policy and its evaluation are combined into a single step.
Consequently, the algorithm only needs to traverse the candidate policy's envelope once per iteration, rather than twice.
Moreover, it enables us to stop the construction of the greedy policy as soon as an unsafe state has been detected in its envelope\report{~(\cref{alg:better-pi}~\cref{line:rec-pi-check-unsafe-return})}; this avoids the expensive situation of \nPI, where \(\Greedy(\!\cdots\!)\) encounters an unsafe state early on but continues construction, even though the policy can not be safe.
\short{
\iPI has the same complexities as \nPI (by similar arguments), and preliminary results showed \iPI to be significantly faster in practice.
}

\report{
In the change from \nPI to \iPI we also generalise the notion of an initial policy into an action order \actionOrder where \(\actionOrder(\s)\) returns the same actions as \(\A(\s)\) but ordered.
Consequently, we do not provide \(\pInit\), as it is captured by the first ``preference'' of each \(\actionOrder(\s)\).
This leads to the subtle detail that we no longer construct policies that are greedy \wrt \V, but rather we construct policies according to the preferences of \actionOrder.
We made this choice because profiling revealed that computing \qvalues to compute the greedy policy \wrt \V is relatively expensive, and does not pay off, i.e., using an arbitrary fixed order over actions had better performance.
The choice of \actionOrder can be important, since a good ordering may quickly lead to a safe policy or to a way to prove \sZ as unsafe, and a bad one causes \optPI to waste time.

We note that with \iPI's recursive definition it is not immediately clear whether the main loop (lines \ref{line:better-pi-loop-start}--\ref{line:better-pi-loop-end}) is still necessary.
Indeed, it is still necessary, as demonstrated by the example in \cref{sec:ipi-main-loop-necessary}.

\iPI~(\cref{alg:better-pi}) has the same best-case and worst-case runtimes as \nPI~(\cref{alg:pi}), by similar arguments.

\begin{algorithm}[ht]
{\small
\DontPrintSemicolon
  \caption{Improved Policy Iteration}\label{alg:better-pi}
  \function{\(\optPI(\fond, \sZ, \text{ action order } \actionOrder)\)} {
    \(\V(\s) \gets \iverson{ \s \in \failStates} \; \forall \s \in \Ss\) \;
    \RepeatForever{ \label{line:better-pi-loop-start}
      \(\texttt{seen} \gets \emptyset\) \;
      \(\texttt{saw-unsafe} \gets \text{false}\) \;
      \(\reciPI(\sZ)\) \;
      \If{\(\neg \texttt{saw-unsafe}\) or \(\V(\sZ) = 1\)} {
        \Return \(\V(\sZ)\) \;
      }
    } \label{line:better-pi-loop-end}
  }

  \function{\(\reciPI(\s)\)}{
    \tcp{Rec-iPI has access to \optPI's vars}

    \If{\(\V(\s) = 1\)}{
      \(\texttt{saw-unsafe} \gets \text{true}\) \;
      \Return unsafe \; \label{line:rec-pi-check-unsafe-return}
    }
    \If{\(\s \in \texttt{seen}\)}{
      \Return maybe-safe \;
    }
    \(\texttt{seen} \gets \texttt{seen} \cup \{\s\}\) \;

    \tcp{Look for safe action in \(\actionOrder(\s)\)}
    \For{\(\ac \in \actionOrder(\s)\)} {
      \For{\(\s' \in \supp(\s,\ac)\)}{ \label{line:rec-pi-check-safe-start}
        \(\s'\texttt{-status} \gets \reciPI(\s')\) \;
      }
      \If{\(\s'\texttt{-status} \neq \text{unsafe} \; \forall \s' \in \supp(\s,\ac)\)}{
        \Return maybe-safe \;
      }
    }

    \tcp{All actions unsafe means \s unsafe}
    \(\V(\s) \gets 1\) \;
    \(\texttt{saw-unsafe} \gets \text{true}\) \;
    \Return unsafe \;
  }
}
\end{algorithm}

}

\short{
Note that \nPI requires an \textbf{initial policy} \(\pInit\); \optPI generalises this and takes an \textbf{action ordering} for each state, i.e., a function that determines in which order the actions in \(\A(\s)\) are explored.
This choice can be important, since a good ordering may quickly lead to a safe policy or to a way to prove \sZ as unsafe, and a bad one causes \optPI to waste time.
}

\begin{figure*}[ht!]
\centering

\includegraphics[width=\textwidth]{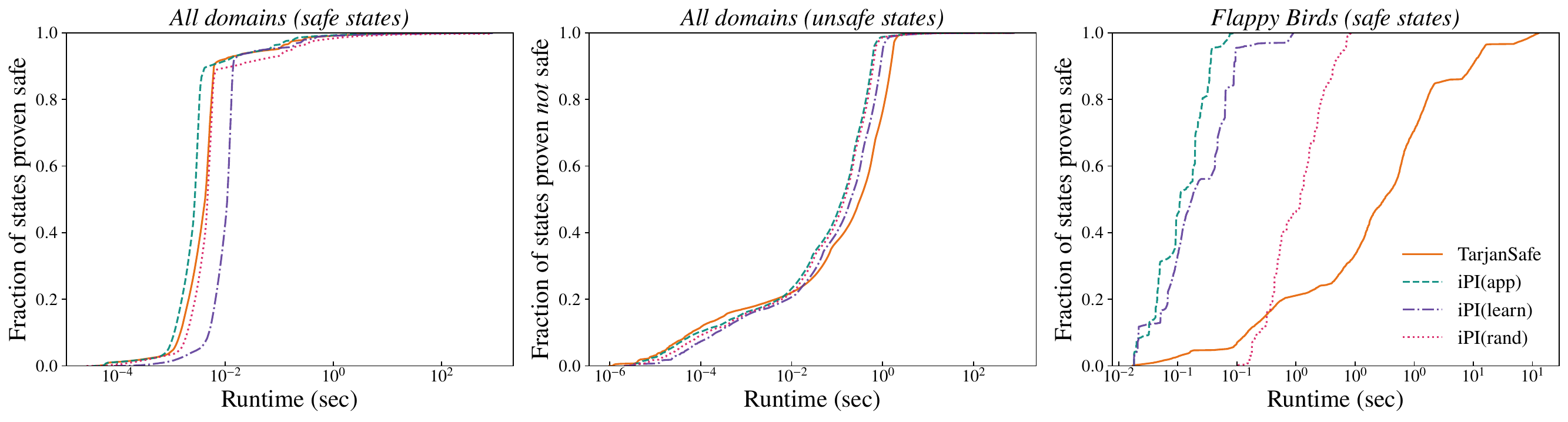}

\caption{
Cumulative plots of the proportion of states decided \wrt time.
The plots separate safe and unsafe states: (left) is an aggregation over all domains' safe states, (middle) is over all unsafe states, and (right) is over safe states in Flappy Bird.
}
\label{plot:short-yes-no-summary}
\end{figure*}

\section{Experiments}\label{sec:experiments}
We empirically compare \optPI with JEA's \TarjanSafe for deciding state safety: given a learned policy \p that we wish to check for safety, we find a state \s in \p's graph, and interrogate \optPI and \TarjanSafe whether \s is safe.
Recall that finding such states is at the core of JEA's pipeline for testing the safety of the learned policy.
We consider \optPI with three different action orderings (1) \orderApp: the order of the model (same as \TarjanSafe); (2) \orderLearn: using the learned action policy that we are evaluating; (3) \orderRand: randomised.
\short{We do not consider \optVI because it runs out of memory.}
\report{We do not consider \optVI because it runs out of memory (see \cref{sec:linear-algo} and \cref{sec:prop-unsafety-bdds}).}
Our implementation extends JEA's C++ code and is available at~\cite{schmalz:jain:icaps2026-zenodo}.
All experiments were run on Intel Xeon E5-2660 CPUs with 12GB memory and a timeout of 15 minutes.

\paragraph{Benchmarks.}
We use JEA's benchmarks (scaled up), consisting of non-deterministic variants of \emph{blocksworld} and \emph{transport}; deterministic control benchmarks (\emph{bouncing ball}, \emph{follow car} %
and \emph{inverted pendulum}); as well as different variants of a transport-like domain called \emph{one/two-way line}, where a truck moves uni-(respectively bi-)directionally on a line, carrying packages to the right.
We also introduce a new benchmark, \emph{Flappy Bird}.
\short{
A safe policy directs the bird around a repeating track indefinitely while avoiding stationary obstacles.
}
\report{
In this problem, a bird must navigate a 2D environment from left to right, avoiding stationary obstacles.
It can move up or down, and in each move may non-deterministically move to the right.
Once the bird reaches the right side of the environment, it gets reset to the left side, i.e., it moves through the environment indefinitely unless it hits an obstacle.
Thus, a safe policy moves the bird through the environment indefinitely, avoiding all obstacles.
}
When safe policies exist, this showcases \TarjanSafe's bad performance, taking inspiration from the example in \cref{fig:tarjan-killer-2} in a more practical setting.
Following JEA, all problems are encoded in JANI~\cite{Budde2017:JANI}.
For learned policies to test, we trained neural networks via $Q$-learning with two hidden layers of size 64.

\paragraph{Results.}
Our results are summarised in \cref{plot:short-yes-no-summary}.
We note the benchmarks are solved in small timescales: the \optPI variants solve almost all problems within 1~sec.
This is not because the problems are small: we handle instances with up to 80 integer variables bounded by 70.
To motivate this further, \optVI runs out of memory as it enumerates all exponentially-many fail states (\wrt JANI description); similarly, \optVI with regression over Binary Decision Diagram~(BDDs), which can represent a set of states compactly, also runs out of memory as the BDDs become too large.
Rather, the benchmarks are solved quickly by exploiting the features of state-avoiding problems.
For deciding safe states, there are often cycles that \TarjanSafe and \optPI use to quickly construct safe policies that avoid fail states.
For deciding unsafe states, \TarjanSafe and \optPI do not need to explore deeply before finding a way to propagate unsafety to \sZ.

\paragraph{Comparing the variants of \optPI:} \iPIApp dominates the others.
The orderings themselves do not appear to have a significant impact: all orderings tend to guess safe policies quickly (for safe states) and propagate the unsafe region to \sZ efficiently (for unsafe states).
Rather, the performance difference comes from implementation, namely the overhead of sorting with \textit{learn} and randomising with \textit{rand}.

\paragraph{Comparing \optPI to \TarjanSafe:} For deciding states over all domains (left and middle), there is no significant difference between the approaches.
JEA's benchmarks contribute most to these results, and they tend to be amenable to \TarjanSafe, so this demonstrates that \optPI indeed has similar best-case behaviour to \TarjanSafe.
In contrast, for deciding safe states in Flappy Birds, \optPI scales exponentially better than \TarjanSafe (note the runtime scale is logarithmic).
This confirms our theory that \optPI is exponentially faster than \TarjanSafe in its worst case.
So, \iPI never performs significantly worse than \TarjanSafe, and in some cases performs exponentially better.

\paragraph{Summary.}
These results confirm our theory that \iPIApp dominates \TarjanSafe in the sense that it is similar for problems amenable to \TarjanSafe, and exponentially better in problems that are problematic for \TarjanSafe such as Flappy Bird.
Furthermore, the results show that within \iPI, the default action ordering performs best in our setting.
We give a breakdown over domains and more discussion in \cref{sec:detailed-results}, these are consistent with our findings here.

\section{Conclusion}\label{sec:conclusion}

The state-safety decision problem is significant, being a necessary component of JEA's safety assurance pipeline for learned policies.
We considered JEA's \TarjanSafe algorithm for deciding state safety, and introduced \optVI and \optPI.
We showed \TarjanSafe has an exponential worst-case runtime but a good best case; in contrast, \optVI has a linear worst case but impractical best case;
\optPI fills the gap, combining a polynomial worst case while maintaining the same best case as \TarjanSafe.
We confirmed this empirically: \optPI is comparable to \TarjanSafe when \TarjanSafe does well, and otherwise \optPI scales exponentially better.
This places \optPI as the current best algorithm for deciding state safety in our framework, and as the most promising starting point for future work.

\section*{Acknowledgements}

Funded by the Deutsche Forschungsgemeinschaft (DFG, German Research Foundation) -- GRK 2853/1 ``Neuroexplicit Models of Language, Vision, and Action'' - project number 471607914.

\appendix %

\report{
\section{Propagating Unsafety with Binary Decision Diagrams}\label{sec:prop-unsafety-bdds}
Rather than regressing unsafety directly in the state space, it is natural to try to exploit problem structure and regress using Binary Decision Diagrams (BDDs).
In JEA's setting, the set of fail states $S_f$ is compactly represented as a symbolic formula $\phi_F$ called the \textbf{fail condition}.
The symbolic representation of the unsafe region is equivalent to the weakest precondition a state has to satisfy to guarantee reaching the fail condition.
This can be computed using regression~\cite{jussi:rintanen:ecai-08} and BDDs are state-of-the-art structures for such computations.
A concrete implementation is provided in \cref{alg:unsafe_region_BDD}.\footnote{BDDs are restricted to boolean variables, but our transition system deals with linear constraints on integer variables, so we encode the constraints in \textit{toBDD} as in \citet{bartzis:bultan:sttt-06} .}

This algorithm uses $\rho$ to represent the previously found unsafe region and $\tau$ for the newly found unsafe states. In each iteration, $\rho$ is updated by adding the states in $\tau$.
$\tau$ is found by computing the preimage of each action with respect to $\rho$.
Here, each action consists of a \textbf{guard} (g): to represent the weakest precondition a state must satisfy for the action to be applicable in it; and \textbf{updates} ($u_1, \dots u_n$): to represent the nondeterministic effects of applying an action on a state.
The following describes the preimage computation for a given set of successor states $S(v')$ and an action $a$. The set of predecessor states that can reach $S(v')$ on applying $a$ is: $g(v) \land \exists v' (S(v')\land (u_1(v,v') \lor \dots u_n(v, v')))$.
Intuitively, $\exists v'. S(v') \land (u_1(v, v') \lor\dots u_n(v,v'))$ is a relational product~\cite{burch:etal:tcad-94} to represent states where applying $a$ leads to states in $S(v')$. A disjunctive partitioning ensures any nondeterministic effect leading to $S(v')$ is included. Conjuncting with $g(v)$ filters out all states where this action is not applicable.

\begin{algorithm}[ht]
  \small
    \SetKwFunction{tro}{TR}
    \SetKwProg{Fn}{Function}{$\to$ BDD:}{}
    \SetKwFunction{Regr}{Reg}
    \SetKwFunction{final}{computeUnsafe}
    \SetKwFunction{toBDD}{toBDD}

    \Fn{\final{$\phi$}}{
        $\rho\gets$ \toBDD$(\bot); \tau\gets$ \toBDD$(\phi)$\;
        \While{$\tau\not\subseteq \rho$}{
            $\rho\gets \rho \lor \tau$\;
            $\tau\gets$ \toBDD~$(\top)$\;
            $G \gets$ \toBDD~$(\bot)$\;
            \ForEach{$a\in A$}{
                $\tau \gets \tau~\land \tro(a, \rho)$\;
            }
        }
        \KwRet{$\rho$}\;
    }

    \Fn{\tro{$a, \rho$}}{
        \uIf{$a = \langle g,u_0|\dots|u_n\rangle$}{\KwRet{$\toBDD(g) \land (RelProd(\toBDD(u_0), \rho)\lor\dots\lor RelProd(\toBDD(u_n), \rho))$} }
    }
    \caption{Compute the unsafe region using Binary Decision Diagrams}
    \label{alg:unsafe_region_BDD}
\end{algorithm}

\textbf{\optVI is not practical for our problems.}
The fail condition \(\phi_F\) describes exponentially-many fail states \failStates, so \cref{alg:opt-vi} runs out of memory in its first step of enumerating fail states, even before attempting to propagate the unsafe region.
With BDDs (\cref{alg:unsafe_region_BDD}), it is possible to represent \(\phi_F\) compactly, but after propagating it a few steps to describe the unsafe region the BDDs explode in size, again running out of memory.
Thus, it seems that regressing the unsafe region is not a suitable approach to solve our problems.

\section{Is the Main Loop in \iPI Necessary?}\label{sec:ipi-main-loop-necessary}

\textit{Is the main loop~(lines \ref{line:better-pi-loop-start}--\ref{line:better-pi-loop-end}) necessary, or is a single call to \(\reciPI(\sZ)\) enough?}
The loop is indeed necessary, because there are cases where \(\reciPI(\sZ)\) constructs an unsafe policy \p and does not return unsafe.
Let us apply \optPI to the state-avoiding task shown in \cref{fig:rec-pi-finds-unsafe}, the key steps are:
\begin{itemize}
\item suppose \(\reciPI\) first traverses \(\sZ, \ac_0, \s_1, \ac_1, \s_2, \ac_2\) and then encounters \(\s_2\) which is marked as \texttt{seen}, so the recursion unwinds to \(\s_1\)
\item at \(\s_1\), \(\reciPI\) sees that the only action \(\ac_1\) inevitably encounters a fail state, so it sets \(\V(\s_1) = 1\) and backtracks to \sZ,
\item at \sZ \(\reciPI\) selects \(\ac'_0\) as a potentially safe alternative to \(\ac_0\) --- this leads to \(\s_2\) which is in \texttt{seen}, now the recursion fully unwinds.
\end{itemize}
However, the policy \p with \(\p(\sZ) = \ac'_0\), \(\p(\s_1) = \ac_1\), and \(\p(\s_2) = \ac_2\) is unsafe.
The key insight is that \(\reciPI\) only propagates that states are unsafe with \(\V(\s) = 1\) backwards, but in the presence of cycles, as we see in \cref{fig:rec-pi-finds-unsafe}, it may also be necessary to propagate this information forwards to detect that the policy is unsafe.

\begin{figure}[ht]
\center
\begin{tikzpicture}[>=stealth, node distance=1.5cm, very thick, minimum size=0.5cm]
  \node[circle, draw, inner sep=2pt] (n0) at (0,0) {\sZ};
  \node[circle, draw, inner sep=2pt] (n1) at (3,0) {\(\s_1\)};
  \node[circle, draw, inner sep=2pt] (n2) at (6,0) {\(\s_2\)};
  \node[circle, draw, inner sep=2pt] (nX) at (4.5,0.7) {\(\times\)};

  \draw[-] (n0) -- node[below] {\(\ac_0\)} (n1) ;;

  \draw[->] (n1) -- node[below] {\(\ac_1\)} (n2) ;
  \coordinate (aux1) at (4, 0);
  \draw[->, out=45, in=180+45, looseness=1] (aux1) to (nX);

  \draw[->, out=90, in=90] (n2) to node[above] {\(\ac_2\)} (n1);

  \draw[->, out=-45, in=180+45, looseness=0.4] (n0) to node[below] {\(\ac'_0\)} (n2);

\end{tikzpicture}
\caption{A state-avoiding task where a single call to \reciPI finds an unsafe policy but does not recognise it as unsafe.}
\label{fig:rec-pi-finds-unsafe}
\end{figure}
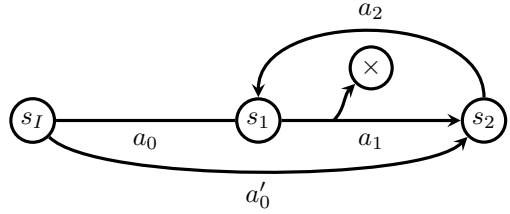

\section{Detailed Results}\label{sec:detailed-results}
We break down the plots from \cref{plot:short-yes-no-summary} by their domains in \cref{plot:no-extended} and \cref{plot:yes-extended}.
Over domains, the results are consistent with the aggregated results in \cref{sec:experiments}, i.e., \iPIApp and \TarjanSafe have similar performance on JEA's benchmarks and an exponential difference when deciding safe states on Flappy Birds; within \iPI, the action ordering \orderApp generally outperforms \orderLearn and \orderLearn.

We emphasise that \TarjanSafe's worst case is only triggered by Flappy Bird when deciding \emph{safe states}, and not when deciding \emph{unsafe states}.
This is because, for unsafe states, \sZ can be proven unsafe before all paths need to be enumerated, bringing the task closer to \TarjanSafe's best case.
Consequently, there is no significant difference between \iPIApp and \TarjanSafe when deciding unsafe states for Flappy Bird, and this does not contradict any of our claims.

\begin{figure*}[t!]
\centering

\begin{tikzpicture}
\node[
  draw,
  thick,
  inner sep=8pt,
  align=center
] (unsafebox) {
  \begin{minipage}{0.99\textwidth}
    \centering
    \includegraphics[scale=0.51, valign=t]{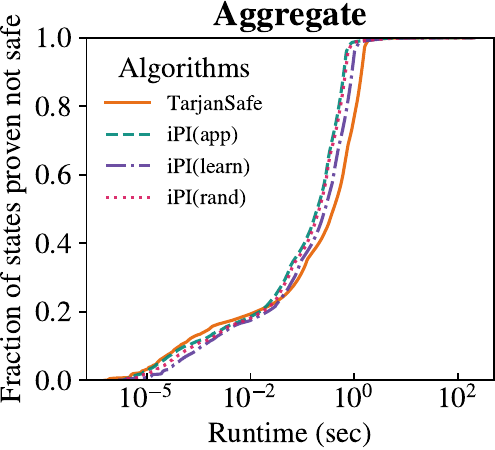}
    \includegraphics[scale=0.51, valign=t]{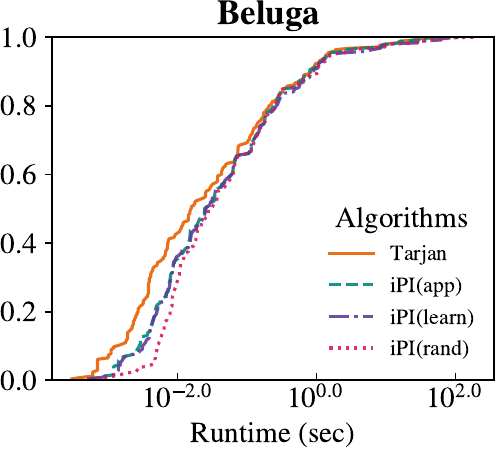}
    \includegraphics[scale=0.51, valign=t]{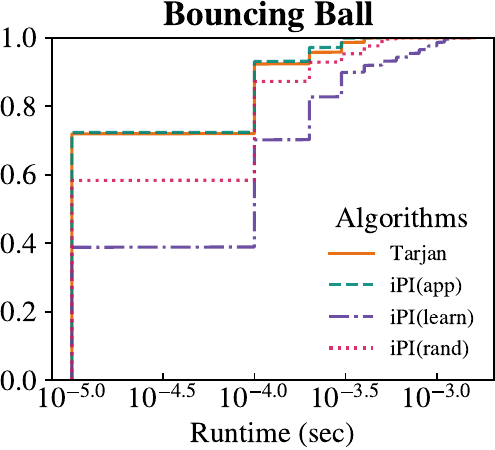}
    \includegraphics[scale=0.51, valign=t]{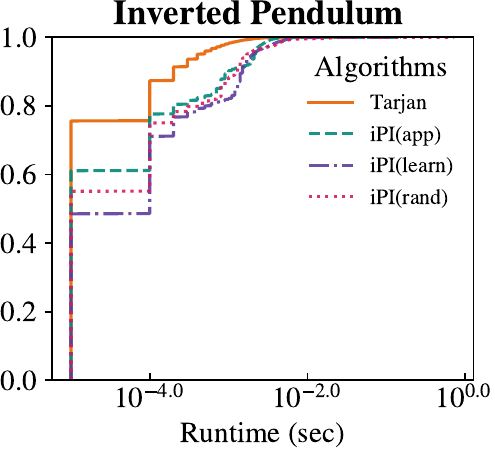}
    \includegraphics[scale=0.51, valign=t]{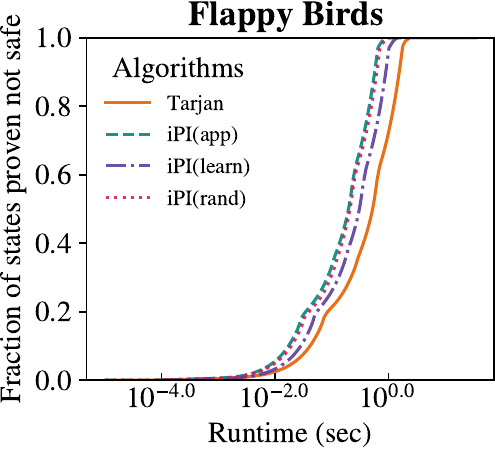}
    \includegraphics[scale=0.51, valign=t]{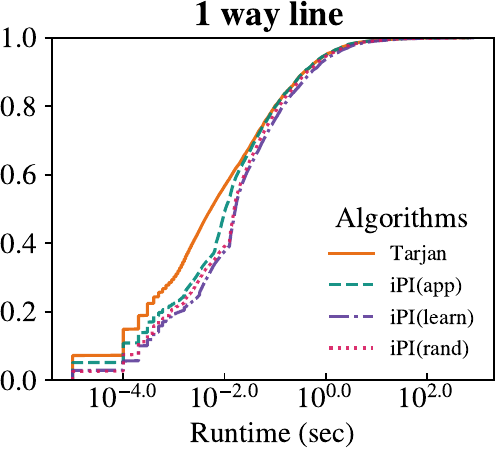}
    \includegraphics[scale=0.51, valign=t]{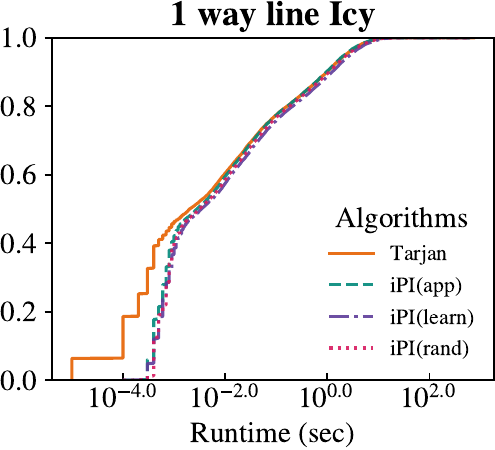}
    \includegraphics[scale=0.51, valign=t]{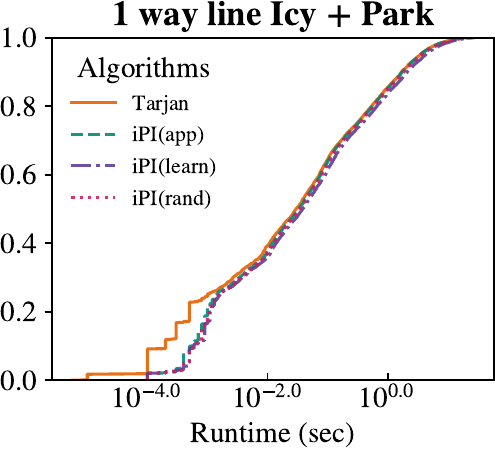}
    \includegraphics[scale=0.51, valign=t]{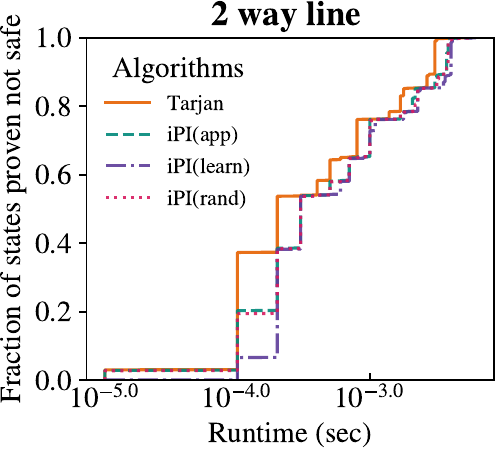}
    \includegraphics[scale=0.51, valign=t]{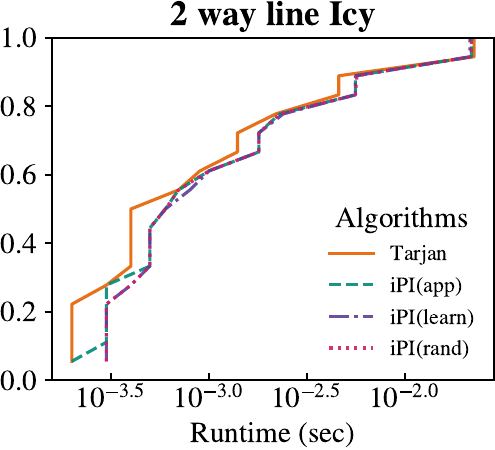}
    \includegraphics[scale=0.51, valign=t]{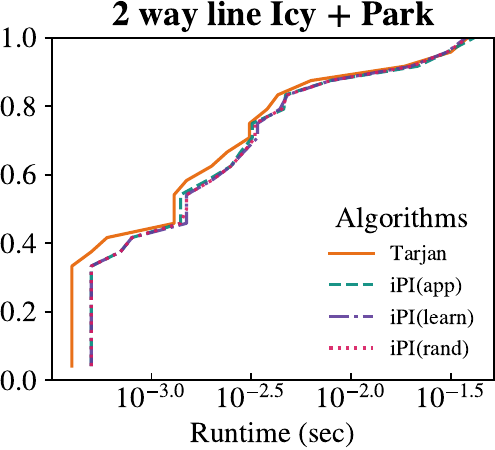}

  \end{minipage}
};
\end{tikzpicture}

\caption{
Cumulative plots of the proportion of states decided \wrt time for \textbf{unsafe states}.
}
\label{plot:no-extended}
\end{figure*}

\begin{figure*}[t!]
\centering

\begin{tikzpicture}
\node[
  draw,
  thick,
  inner sep=8pt,
  align=center
] (safebox) {
  \begin{minipage}{0.99\textwidth}
    \centering
    \includegraphics[scale=0.51, valign=t]{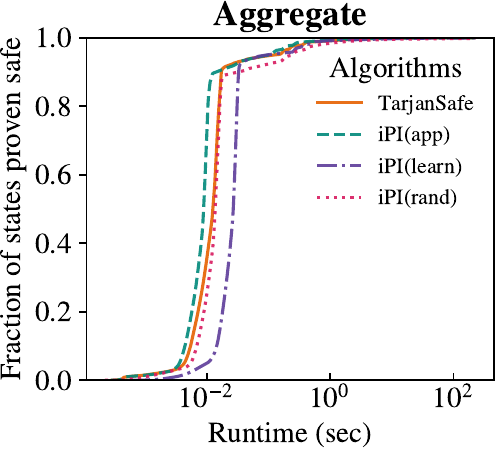}
    \includegraphics[scale=0.51, valign=t]{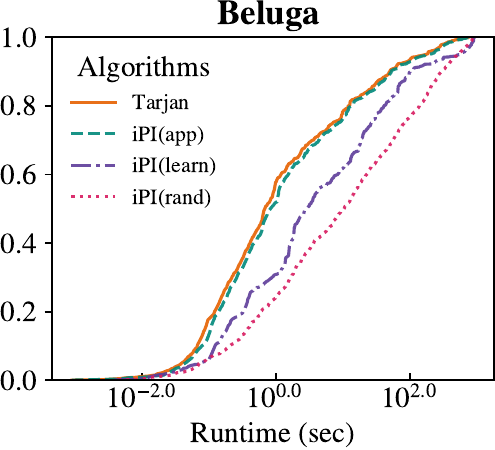}
    \includegraphics[scale=0.51, valign=t]{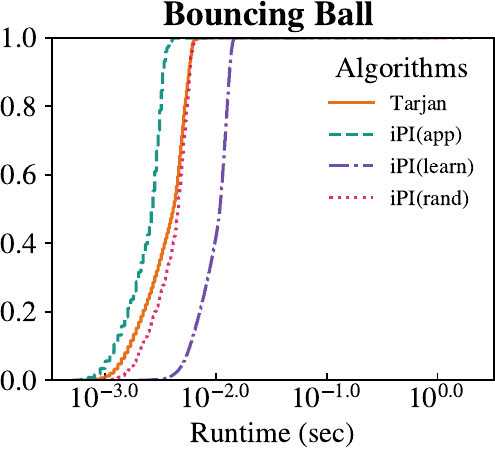}
    \includegraphics[scale=0.51, valign=t]{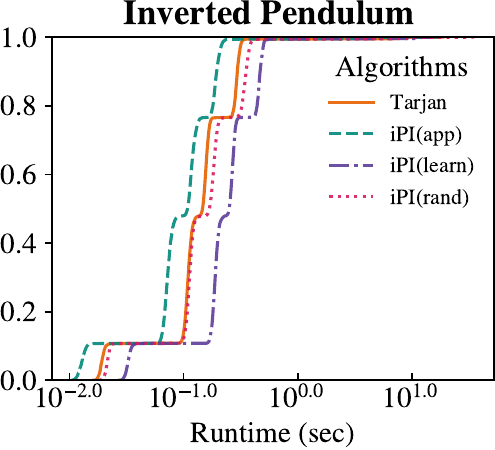}
    \includegraphics[scale=0.51, valign=t]{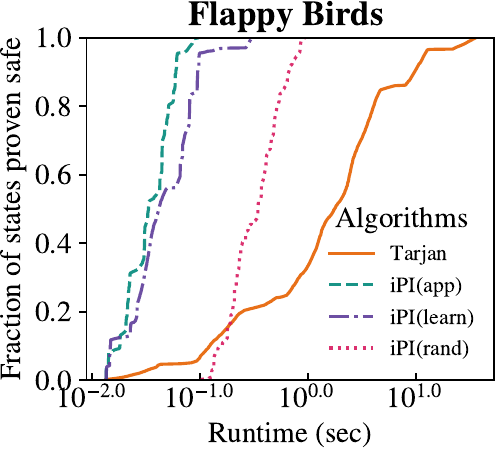}
    \includegraphics[scale=0.51, valign=t]{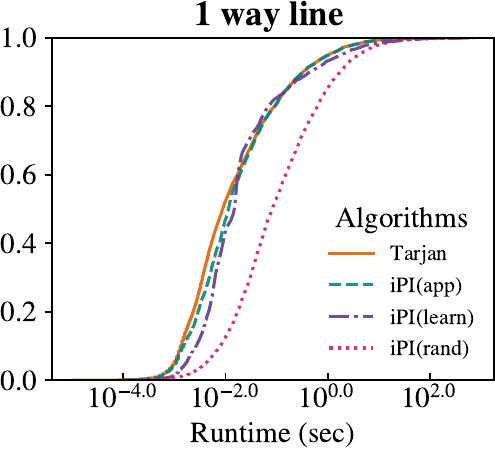}
    \includegraphics[scale=0.51, valign=t]{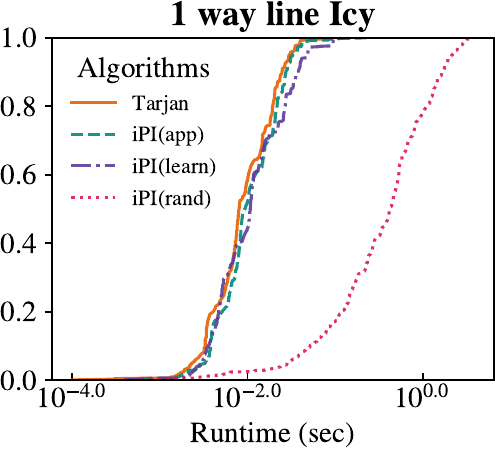}
    \includegraphics[scale=0.51, valign=t]{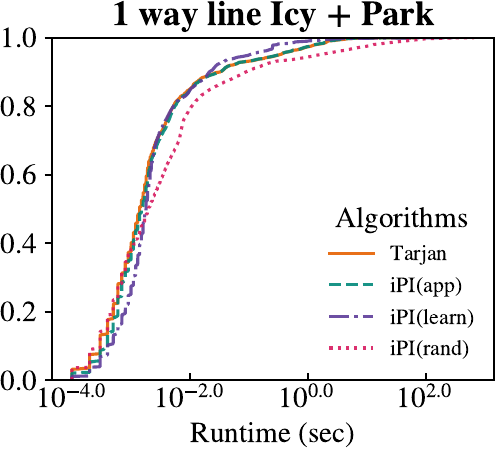}
    \includegraphics[scale=0.51, valign=t]{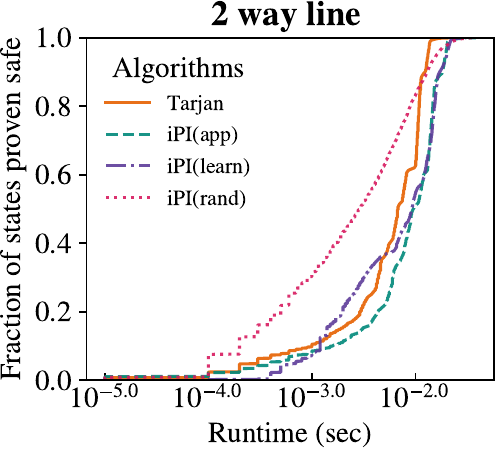}
    \includegraphics[scale=0.51, valign=t]{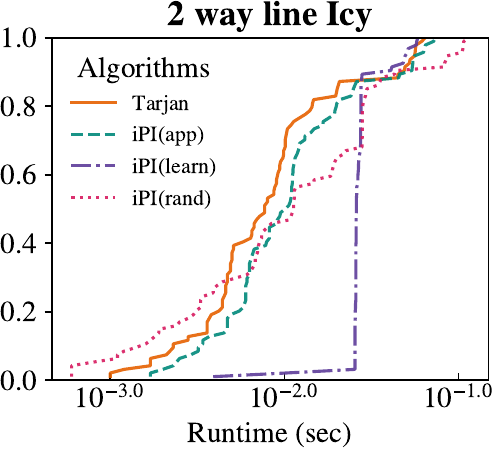}
    \includegraphics[scale=0.51, valign=t]{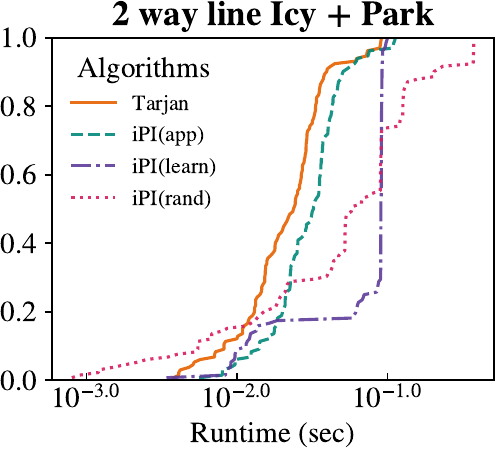}
  \end{minipage}
};
\end{tikzpicture}

\caption{
Cumulative plots of the proportion of states decided \wrt time for \textbf{safe states}.
}
\label{plot:yes-extended}
\end{figure*}

}

\report{
\clearpage
\clearpage
}

\bibliography{references}

\end{document}